\theoremstyle{thmstyleone}
\newtheorem{theorem}{Theorem}
\newtheorem{proposition}[theorem]{Proposition}
\theoremstyle{thmstyletwo}
\theoremstyle{thmstylethree}
\begin{document}
\newtheorem{lemma}{\bf Lemma\,}[section]
 
\title[GOLFS: Feature Selection for High Dimensional Clustering]{GOLFS: Feature Selection via Combining Both Global and Local Information for High Dimensional Clustering}  
 
\author[1]{\fnm{Zhaoyu } \sur{Xing}}  

\author[3]{\fnm{Yang } \sur{Wan}} 

\author*[2]{\fnm{Juan} \sur{Wen}} \email{wenjuan@xmu.edu.cn}
\author*[1,2]{\fnm{Wei } \sur{Zhong}} \email{wzhong@xmu.edu.cn}

\affil*[1]{Wang Yanan Institute for Studies in Economics, Xiamen University, China}
\affil*[2]{Department of Statistics and Data Science, School of Economics, Xiamen University, China}
\affil[3]{ByteDance Ltd., Beijing, China}

\abstract{It is important to identify the discriminative features for high dimensional clustering. However, due to the lack of cluster labels, the regularization methods developed for supervised feature selection can not be directly applied. To learn the pseudo labels and select the discriminative features simultaneously, we propose a new unsupervised feature selection method, named GlObal and Local information combined Feature Selection (GOLFS), for high dimensional clustering problems. The GOLFS algorithm combines both local geometric structure via manifold learning and global correlation structure of samples via regularized self-representation to select the discriminative features. The combination improves the accuracy of both feature selection and clustering by exploiting more comprehensive information. In addition, an iterative algorithm is proposed to solve the optimization problem and the convergency is proved. Simulations and two real data applications demonstrate the excellent finite-sample performance of GOLFS on both feature selection and clustering.}

\keywords{Feature selection, high dimensionality, $l_{2,1}$-norm, manifold learning, regularized self-representation, spectral clustering.}

\maketitle

\section{Introduction}
Massive high-dimensional data are collected in various fields of scientific research and empirical applications, such as text data \citep{L2005}, image data \citep{Yd2017} and bioinformatics data \citep{Ld2008}. The high dimensionality brings many computational and theoretical challenges to traditional statistical methods. To handle these challenges, the sparsity principle \citep{Hd2015} is often assumed that only a small subset of features in high-dimensional space is able to contribute to the response. Under the sparsity, feature selection plays a significant role in knowledge discovery and statistical modeling \citep{FL2006, ZL2012, Fd2013}. When the response is observed in the high-dimensional supervised learning, the regularization methods have been well studied for feature selection, such as LASSO \citep{T1996} and SCAD \citep{FL2001}.
They can simultaneously select important features and estimate their effects in high dimensional regression or classification via minimizing a penalized loss function. For the ultrahigh-dimensional problem where the dimension of features is much greater than the sample size, independent feature screening methods have also been proposed, such as SIS\citep{FL2008}, DC-SIS \citep{Ld2012}, MV-SIS \citep{CLZ2015} and among others. These methods rank the importance of features based on various marginal utilities between each feature and the response. For more information, one can refer to several excellent books on high-dimensional statistics \citep{HTF2009, FLZZ2020}.

However, the response information is not available in many unsupervised learning problems such as clustering. The aforementioned feature selection methods for supervised learning can not be directly applied. Due to the increasing popularity of unlabeled data, there are increasing demands for high-dimensional clustering in many applications, especially in the clustering of image \citep{Cd2007} and text \citep{AZ2012}. The “curse of dimensionality” leads to many challenges for clustering algorithms, and some dimensional reduction methods are directly used to solve the high-dimensional clustering problem by performing principal components analysis(PCA) \citep{Bernardo} and clustering the components. However, the clustering based on components is not sparse in features in practice. Some modifications of this approach are proposed such as Sparse PCA \citep{Zou2006} which obtains sparse loadings while constructing principal components, and IF-PCA \citep{JW2016AOS} which performs as a two-stage clustering by adding a feature selection procedure before classical PCA. Some statistical methods focus on solving the problem of subspace clustering which assumes the clusters can be distinguished on different attributes \citep{Hoff}, and perform the model-based clustering methods for high-dimensional cases \citep{Lim2021, Friedman2004, RD2006}. However, these approaches are shown to be sensitive to outliers in the dataset \citep{Ezugwu}, having a high computational cost for large datasets like gene expression microarrays \citep{Friedman2004} and does not truly result in a sparse clustering \citep{Tibshirani2010JASA}.  Another problem for high-dimensional clustering is the invalid distance measures. Classical clustering algorithms such as K-Means and hierarchical clustering are based on the distance measures between items. Due to many irrelevant features in high-dimensional data, these distance measures can hardly capture the true structure of data and become unreliable for clustering.  
Therefore, the unsupervised feature selection (UFS) methods that are able to identify the discriminative features without knowing the true class labels have raised more and more attention \citep{SFd2020} from the field of both data science and statistics. 

There are three types of UFS approaches i.e. filter method \citep{HCN2005}, wrapper method \citep{DB2004, BL2011}, and embedded method \citep{Yd2011, Ld2012u, DS2015, Wd2015}. The filter methods like the independence feature selection methods rank all features and select the top ones based on predefined evaluation criteria such as variance \citep{DB2004} and Laplace score \citep{HCN2005}. These methods are often computationally efficient, but the separation of feature selection and clustering can lead to an unstable performance. The wrapper methods use the evaluations of final clustering results to evaluate the feature subsets directly. These goal-oriented procedures have good performance but their computational costs are usually high, especially for large-scale and high-dimensional datasets \citep{Sd2019}. The embedded methods introduce the feature selection function into clustering procedures by a well-designed optimization problem which contains several terms for both clustering and feature selection. This class of methods performs clustering and feature selection simultaneously with acceptable computational cost.

Many embedded UFS methods use manifold learning theory to select the feature subset that best preserves the local structure of data. For these methods, spectral clustering based on $k$-adjacent graphs is commonly used to learn pseudo labels, and penalties are usually imposed to select candidate features. Multi-Cluster Feature Selection (MCFS) \citep{Cd2010} selects features by penalized regression methods with pseudo labels learned by spectral clustering directly.  Although MCFS is easy to apply, the simple combination of two separate processes makes the performance of feature selection highly dependent on the accuracy of the first-stage clustering. Unsupervised Discriminative Feature Selection (UDFS) \citep{Yd2011} assumes the labels of data can be predicted locally with a linear classifier and incorporates the local discriminative information to do feature selection in one stage. The Nonnegative Discriminative Feature Selection (NDFS) \citep{Ld2012u} imposes a positive constrain to the pseudo label matrix based on spectral clustering and shrinks the coefficients with the $l_{2,1}$-norm penalty term. The additional improvement with controlling the redundant features is proposed in \citep{Ld2015}. As these methods only focus on the local structure of data to learn pseudo labels, they ignore the global correlation structures of data. We are motivated to improve the learning process by adding correlation information in samples.

Another group of embedded UFS methods \citep{Liud2010, Sd2019, Liangd2018} aims to select the features that can represent and reconstruct the original feature space. The Regularized Self-Representation method (RSR) \citep{Zd2015} assumes the redundant features can be treated as the linear combinations of the relevant features, thus the penalties can be imposed on the coefficient matrix to select the representative features. This idea mainly comes from the low-rank recovery methods of sparse dictionary learning \citep{Liud2010}. The RSR method only considers the correlations and linear dependence among features and ignores the local structure of data. It can potentially lose some important features that are not correlated to other features but preserve discriminative information. We are motivated to exploit the idea of regularized self-representation to additionally incorporate the global correlation structure of data samples.

In this paper, we propose a new unsupervised feature selection method, named GlObal and Local information combined Feature Selection (GOLFS), for high dimensional spectral clustering. It can learn the pseudo labels and select the discriminative features simultaneously. The GOLFS algorithm combines both local geometric structure via manifold learning and global correlation structure of samples via regularized self-representation. Same as the NDFS in  \citep{Ld2012u}, the $l_{2,1}$-norm penalty is imposed on the coefficient matrix to select the discriminative features.
The combination improves the accuracy of both feature selection and clustering by exploiting more comprehensive information. We remark that we borrow the idea of the RSR to study the global structure of the representation in terms of samples, not features. An iterative algorithm is designed to solve our model efficiently and its convergence is studied.  Simulations and two real data applications demonstrate the excellent finite-sample performance of GOLFS on both feature selection and clustering.

The rest of the paper is organized as follows.  Section 2 introduces the GOLFS method as well as its iterative algorithm. The simulations are conducted in Section 3. The empirical studies of two real-world datasets are presented in section 4. We conclude with discussions in Section 5. The proofs are provided in the Appendix.

\section{Methodology of GOLFS}

\subsection{Preliminaries}
Throughout this paper, we denote the matrices, vectors, and scalars as bold uppercase letters, bold lowercase characters, and normal lowercase characters, respectively.
For any matrix $\mathbf{A}=(A_{ij})\in \mathbb{R}^{n\times m}$,
$Tr(\mathbf{A})$ denotes the trace of $\mathbf{A}$, the Frobenius norm and $l_{2,1}$-norm of  $\mathbf{A}$ are defined as $\left\|\mathbf{A}\right\|_F = \sqrt{\sum_{i=1}^n \sum_{j=1}^m a_{ij}^2}$ and $\left\| \mathbf{A}\right\|_{2,1} = \sum_{i=1}^n \sqrt{\sum_{j=1}^m a_{ij}^2}$, respectively. For any vector $\mathbf{v}=(v_{i})\in \mathbb{R}^{n\times 1}$,  $\|\mathbf{v}\|_1=\sum_{i}\lvert v_{i}\rvert$ and $\|\mathbf{v}\|_2=\sqrt{\sum_{i}v_{i}^2}$ represent the $L_1$-norm and $L_2$-norm of  $\mathbf{v}$, respectively.

Denote $\mathbf{X}\in \mathbb{R}^{n\times d}$ as the data matrix for $d$ features with the sample size $n$.
Let $\mathbf{x}_i\in \mathbb{R}^{1\times d}$ be a vector of observations of $d$ features for the $i$-th subject, that is, the $i$-th row of $\mathbf{X}$. Thus, $\mathbf{X}=[\mathbf{x}_1',\mathbf{x}_2',\ldots,\mathbf{x}_n']'$.
$X_{ij}$ stands for the element of $\mathbf{X}$ at the $i$-th row and the $j$-th column.
Suppose the $n$ observations come from $c$ clusters, we use indicator vectors $\mathbf{y}_i\in \{0,1\}^{1\times c}$ to show the cluster of the $i$th subject, and $\mathbf{Y}=[\mathbf{y}_1',\mathbf{y}_2',\cdot\cdot\cdot,\mathbf{y}_n']' \in \{0,1\}^{n\times c}$ is the indicator matrix.
Furthermore, we denote the scaled cluster indicator matrix $\mathbf{F}=[\mathbf{f}_1',\mathbf{f}_2',\cdot\cdot\cdot,\mathbf{f}_n']'=\mathbf{Y}(\mathbf{Y}'\mathbf{Y})^{-\frac{1}{2}}$ same as \cite{YSNJ2011}. It's easy to check $\mathbf{F}'\mathbf{F}=\mathbf{I}_c$, where $\mathbf{I}_c \in \mathbb{R}^{c\times c} $ is an identity matrix.

The embedded methods solve a regularized optimization problem of the indicator matrix $\mathbf{F}$ via introducing the row sparsity of the coefficient matrix $\mathbf{W}$ of $\mathbf{X}$ to perform clustering and feature selection simultaneously. The general form of the objective function for the embedded methods is
\begin{equation}
\label{GF}
\min_{\mathbf{F},\mathbf{W}} f(\mathbf{X},\mathbf{F}) + \alpha l(\mathbf{X},\mathbf{F},\mathbf{W}) + \beta R(\mathbf{W}),
\end{equation}
where the first term $f(\mathbf{X},\mathbf{F})$ is a loss function for clustering to learn the pseudo labels. For spectral clustering, $f(\mathbf{X},\mathbf{F})=Tr(\mathbf{F}'\mathbf{L}\mathbf{F})$, where $\mathbf{L}$ is the Laplace matrix generated from the $k$-adjacent matrix based on the pairwise distances among sample points. The second term $l(\mathbf{X},\mathbf{F},\mathbf{W})$ is the loss function
to connect the pseudo labels with features. For example, it is usually assumed that there is a linear relationship between features and the pseudo labels, leading to $l(\mathbf{X},\mathbf{F},\mathbf{W})=\left\| \mathbf{XW}-\mathbf{F}\right\|_F^2$, where $\mathbf{W}\in \mathbb{R}^{d\times c}$ is a coefficient matrix.
The third term $R(\mathbf{W})$ is the regularization term of $\mathbf{W}$ to encourage its row sparsity for feature selection. $\alpha$ and $\beta$ are tuning parameters
to control the weight between the first two loss functions and the magnitude of the regularization on the row sparsity of $\mathbf{W}$, respectively. Many existing embedded feature selection methods \citep{Yd2011, Ld2012u, Ld2015, Wd2015, Nd2016} are the special cases of this general form.

To better understand the objective function (\ref{GF}), we present the following lemma to establish the equivalence between the original algorithm of spectral clustering and an optimization problem. This lemma provides the foundation for the embedded feature selection methods. The proof of Lemma \ref{TH1} is shown in the Appendix. For more information about spectral clustering, one can refer to a tutorial in \cite{Lux2007}.
\begin{lemma}
\label{TH1}
The minimization of the cut loss in spectral clustering is equivalent to solving the following optimization problem,
\begin{eqnarray}
\arg \min_{\mathbf{F}} ~ Tr ( \mathbf{F}' \mathbf{LF}), ~~
\mbox{s.t.} ~ \mathbf{F}'\mathbf{F}=\mathbf{I},
\end{eqnarray}
where the $\mathbf{L}$ is the Laplace matrix corresponding to adjacency matrix $\mathbf{A}$ and $k$ is pre-defined numbers of clusters.
\end{lemma}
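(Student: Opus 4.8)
The plan is to make precise the standard reduction of the combinatorial graph-cut objective to a constrained trace minimization, and then to observe that the displayed problem is its continuous relaxation. First I would fix the notation for the graph underlying the adjacency matrix: write $\mathbf{A}=(A_{ij})$ for the symmetric $k$-adjacency weights, $\mathbf{D}=\mathrm{diag}(d_1,\dots,d_n)$ with $d_i=\sum_j A_{ij}$, and $\mathbf{L}=\mathbf{D}-\mathbf{A}$ for the associated Laplacian. Given a partition of the $n$ sample points into $c$ clusters $C_1,\dots,C_c$, recall that the cut loss minimized by spectral clustering is the RatioCut, $\mathrm{RatioCut}(C_1,\dots,C_c)=\sum_{k=1}^{c}\mathrm{cut}(C_k,\overline{C}_k)/|C_k|$, where $\mathrm{cut}(S,\overline{S})=\sum_{i\in S,\,j\notin S}A_{ij}$.

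The key algebraic ingredient is the Laplacian quadratic-form identity: for any $\mathbf{v}\in\mathbb{R}^{n}$, $\mathbf{v}'\mathbf{L}\mathbf{v}=\tfrac12\sum_{i,j=1}^{n}A_{ij}(v_i-v_j)^2$, which follows by expanding the right-hand side and using $d_i=\sum_j A_{ij}$. I would then take the scaled cluster-indicator matrix $\mathbf{F}$ exactly as defined in the preliminaries, so that the $k$-th column $\mathbf{f}^{(k)}$ has entries $f^{(k)}_i=|C_k|^{-1/2}$ for $i\in C_k$ and $0$ otherwise; this is precisely $\mathbf{Y}(\mathbf{Y}'\mathbf{Y})^{-1/2}$, and since its columns have pairwise disjoint supports and unit norm we get $\mathbf{F}'\mathbf{F}=\mathbf{I}_c$. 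Applying the identity to each column, the summand $A_{ij}(f^{(k)}_i-f^{(k)}_j)^2$ vanishes unless exactly one of $i,j$ lies in $C_k$, in which case it equals $A_{ij}/|C_k|$, so $(\mathbf{f}^{(k)})'\mathbf{L}\mathbf{f}^{(k)}=\mathrm{cut}(C_k,\overline{C}_k)/|C_k|$. Summing over $k$ and using $Tr(\mathbf{F}'\mathbf{L}\mathbf{F})=\sum_{k}(\mathbf{f}^{(k)})'\mathbf{L}\mathbf{f}^{(k)}$ gives $Tr(\mathbf{F}'\mathbf{L}\mathbf{F})=\mathrm{RatioCut}(C_1,\dots,C_c)$. (The normalized-cut variant is identical after replacing $\mathbf{L}$ by $\mathbf{D}^{-1/2}\mathbf{L}\mathbf{D}^{-1/2}$ and rescaling the indicators by cluster volumes.)

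This establishes that minimizing the cut loss over all partitions into $c$ clusters is the same as minimizing $Tr(\mathbf{F}'\mathbf{L}\mathbf{F})$ over the discrete set of matrices of the above indicator form, each satisfying $\mathbf{F}'\mathbf{F}=\mathbf{I}$. Because this feasible set is combinatorially large and the exact problem is NP-hard, the spectral-clustering procedure relaxes it by discarding the special indicator structure and keeping only the orthogonality constraint $\mathbf{F}'\mathbf{F}=\mathbf{I}$, which is exactly the optimization problem in the statement; by the Rayleigh--Ritz (Ky Fan) theorem its minimizer is the matrix whose columns are eigenvectors of $\mathbf{L}$ for the $c$ smallest eigenvalues, recovering the usual spectral embedding. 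I expect the only delicate point to be bookkeeping — handling the factor $\tfrac12$ and the double counting of symmetric edges in the quadratic-form identity, and being explicit that "equivalent" here means "equal as a combinatorial problem, then relaxed," so that the lemma faithfully reflects the standard derivation rather than asserting that the relaxation is tight.
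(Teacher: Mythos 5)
Your proof follows essentially the same route as the paper's: the Laplacian quadratic-form identity $\mathbf{v}'\mathbf{L}\mathbf{v}=\tfrac12\sum_{i,j}A_{ij}(v_i-v_j)^2$ applied column-by-column to the scaled cluster-indicator matrix, giving $Tr(\mathbf{F}'\mathbf{L}\mathbf{F})$ equal (up to a harmless constant factor) to the cut loss, hence the stated constrained trace minimization. Your extra care about the factor $\tfrac12$ from double-counting symmetric edges and your explicit remark that the problem in the lemma is the relaxation obtained by dropping the discrete indicator structure (solved via Rayleigh--Ritz) only sharpen the same argument the paper gives.
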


\subsection{The objective function of GOLFS}

The aforementioned embedded feature selection methods usually only utilized the Laplace matrix  $\mathbf{L}$ generated from the $k$-adjacent graph in spectral clustering to preserve the local structure of data. However, we observe that the global correlation structure among data samples
is able to improve the learning process of the pseudo labels and further enhance the accuracy of feature selection for spectral clustering. In this paper, we consider the idea of regularized self-representation to additionally incorporate the global correlation structure of data samples in the first term of the minimization problem (\ref{GF}).

Firstly, we solve the global structure of data samples through the following objective function:
\begin{equation}\label{global}
\min_{\mathbf{P}} \left\| \mathbf{X}'-\mathbf{X}'\mathbf{P}\right\|_{2,1}+\kappa\left\|\mathbf{P}\right\|_{2,1},
\end{equation}
where $\mathbf{P}\in \mathbb{R} ^{n\times n}$ is the coefficient matrix of $\mathbf{X}$ which  describes the self-representation information in the sample space, $\kappa \geq 0$ is the tuning parameter, $l_{2,1}$-norm encourages the row sparsity of $\mathbf{P}$. Though the objective function (\ref{global}) is convex, two terms are both non-smooth and an iterative reweighted least-squares algorithm is used to solve the optimal $\mathbf{P}$. The elements of $\mathbf{P}$ stand for the weights of edges between observations and show the global correlation structures of sample points.
That is, each sample point can be represented as the linear combination of its most relevant sample points.
Thus, $\mathbf{P}$ can also be treated as the special adjacency matrix for spectral clustering from a global correlation perspective. Here we focus on the strength of the connections between samples and ignore the sign of coefficients. So after we obtain the optimal estimated $\widehat{\mathbf{P}}$ of (\ref{global}), a symmetric global similarity matrix $\mathbf{S_1}$ is obtained by
\begin{equation}
\mathbf{S_1}=\frac{ \widehat{\mathbf{P}} \odot \mathbf{\Phi}    + ( \widehat{\mathbf{P}} \odot\mathbf{\Phi})'  }{2},
\end{equation}
where $\odot$ represents the calculation of Hadamard product and $\bm{\Phi}$ is the sign matrix of $\widehat{\mathbf{P}}$, that is $\bm{\Phi}_{ij}=sgn(P_{ij})$ for all $i$ and $j$. And the Laplace matrix $\mathbf{L}_1$ corresponding to the global similarity matrix $\mathbf{S}_1$ is given by
$\mathbf{L}_1=\mathbf{D}_1-\mathbf{S}_1,$
where $\mathbf{D}_1$ is a diagonal matrix with $D_{ii}=\sum_{j=1}^n (S_1)_{ij},~i=1,2,\cdot\cdot\cdot,n$.

Besides, to preserve the local geometric structure of data, we also consider the spectral clustering algorithm using manifold learning  \citep{Cd2010L, Ld2012u}. The $n$ observations are viewed as $n$ vertices in an undirected graph. Each point and its $k$ nearest neighbors are connected with weight $(S_0)_{ij}$ generated from the Gaussian kernel, that is,
\begin{equation}
\label{ }
(S_0)_{ij} =
\left\{
\begin{array}{ll}
\exp\left(-\frac{\left\|\mathbf{x}_i-\mathbf{x}_j\right\|^2}{\sigma^2}\right), &\mathbf{x}_i \in \mathcal{N}_k(\mathbf{x}_i) \cup \mathcal{N}_k(\mathbf{x}_j);\\
0,  & \mbox{otherwise},
\end{array}
\right.
\end{equation}
where $\mathcal{N}_k(\mathbf{x}_i)$ is the set of $k$-nearest neighbors of $\mathbf{x}_i$, $k$ and $\sigma$ is the pre-defined parameters. The local geometric structure of data is represented by this adjacency matrix $\mathbf{S}_0$ and the Laplace matrix $\mathbf{L}_0$ is obtained by
$\mathbf{L}_0=\mathbf{D}_0-\mathbf{S}_0$,
where $\mathbf{D}_0$ is a diagonal matrix with $(D_0)_{ii}=\sum_{j=1}^n (S_0)_{ij},~i=1,2,\cdot\cdot\cdot,n$.

To combine the local and global information of sample points together for learning more accurate pseudo labels, we propose to use a weighted form of two Laplace matrices
and consider the following objective function,
\begin{eqnarray}
\label{golfs0}
\min_{\mathbf{F,W}} ~ Tr(\mathbf{F}'(\mathbf{L}_1+\lambda\mathbf{L}_0)\mathbf{F}) +\alpha(\left\| \mathbf{XW}-\mathbf{F}\right\|_F^2+\beta\left\|\mathbf{W}\right\|_{2,1}),
\end{eqnarray}
where $\mathbf{F}=\mathbf{Y}(\mathbf{Y}'\mathbf{Y})^{-\frac{1}{2}}$ is defined as a scaled cluster indicator matrix, and $\lambda$ is the weighting parameter to combine $\mathbf{L}_1$ and $\mathbf{L}_0$. Furthermore, to solve the NP-hard problem \citep{SM2000} due to the discrete-valued elements of $\mathbf{F}$ and require the non-negativity of elements of $\mathbf{F}$, we follow the idea of the NDFS \cite{Ld2012u} to impose both the orthogonal constraint $\mathbf{F}'\mathbf{F}=\mathbf{I}_c$ and the nonnegative constraint $\mathbf{F}\geq 0$ in the optimization method. Therefore, we rewrite the objective function of (\ref{golfs0}) as
\begin{eqnarray}
\label{golfs}
\min_{\mathbf{F,W}} ~ Tr[\mathbf{F}'(\mathbf{L}_1+\lambda \mathbf{L}_0)\mathbf{F}] +\alpha(\left\|\mathbf{X}\mathbf{W}-\mathbf{F}\right\|_F^2+\beta\left\|\mathbf{W}\right\|_{2,1}), \\
\mbox{subject to} ~~ \mathbf{F}'\mathbf{F}=\mathbf{I}_c, \mathbf{F}\geq 0, \notag
\end{eqnarray}
where $\lambda,\alpha,\beta$ are the tuning parameters.
The first term of (\ref{golfs}) is the learning process of the pseudo labels based on both the local and global structure of data samples.
The second and third terms are the same as the NDFS method proposed in \cite{Ld2012u} for learning the feature selection matrix by a regression model between the pseudo labels and high-dimensional features with $l_{2,1}$-norm regularization. The $l_{2,1}$-norm penalty of $\mathbf{W}$ encourages the row sparsity of $\mathbf{W}$, which means that some rows of $\mathbf{W}$ can be estimated to be zeros or close to zero and the corresponding features can be deleted for spectral clustering.

Therefore, this new unsupervised feature selection method is
named GlObal and Local information combined Feature Selection (GOLFS) for high dimensional spectral clustering.
We remark that when only $\mathbf{L}_0$ is considered in the first term of (\ref{golfs}), the GOLFS becomes the NDFS. Thus, the GOLFS method can be considered a natural extension of the NDFS method by additionally incorporating the global correlation structure of data samples.

\subsection{An optimization algorithm}
The optimization procedure of the GOLFS method consists of two stages. The first stage is to learn the global correlation structure of data samples via solving the optimization problem (\ref{global}) and the second stage is to learn
the scaled cluster indicator matrix $\mathbf{F}$ and
the coefficient matrix $\mathbf{W}$ via solving the optimization problem (\ref{golfs}).

In the first stage, we applied the iterative reweighted least-squares (IRLS) algorithm proposed by \cite{Zd2015,Lange2000}
to solve the optimal $\widehat{\mathbf{P}}$ of (\ref{global}).
Given the current estimation $\mathbf{P}^{t}$,
we can update the estimate of $\boldsymbol{P}$ via
\begin{equation}
\boldsymbol{P}^{t+1}=\left[\left(\boldsymbol{G}_{2}^{t}\right)^{-1} \boldsymbol{X} \boldsymbol{G}_{1}^{t} \boldsymbol{X}'+\kappa \boldsymbol{I}_n\right]^{-1}\left[\left(\boldsymbol{G}_{2}^{t}\right)^{-1} \boldsymbol{X}  \boldsymbol{G}_{1}^{t} \boldsymbol{X}' \right],
\end{equation}
where $\boldsymbol{I}_n \in \mathbb{R}^{n\times n}$ is the identity  matrix,
$\boldsymbol{G}_{1}^{t}$ is a $d\times d$ diagonal weighting matrix with diagonal elements
$g_{1, j}^{t} = 1/\max(2\left\|\widetilde{\boldsymbol{x}}_{j}'-\widetilde{\boldsymbol{x}}_{j}' \boldsymbol{P}^{t}\right\|_2,\varsigma)$ for $j=1, 2, \ldots, d$,
$\widetilde{\boldsymbol{x}}_{j}\in \mathbb{R}^{n\times 1}$ is the $j$-th column vector of $\boldsymbol{X}$,
$\boldsymbol{G}_{2}^{t}$ is an $n\times n$ diagonal weighting matrix with diagonal elements
$g_{2, i}^{t}=1/\max(2\left\|\boldsymbol{p}_{i}^{t}\right\|_{2},\varsigma)$ for $i=1, 2, \ldots, n$,
 and $\varsigma$ is a sufficiently small positive value.
 The details of this step to solve the global structure matrix $\widehat{\mathbf{P}}$ are shown in Algorithm \ref{GOLFS}.

Once we have learned the global correlation of samples, we can modify the algorithm in \citep{Ld2012u}
to solve the objective function (\ref{golfs}) of the GOLFS. To be specific, we propose an iterative algorithm to solve $\mathbf{F}$ and $\mathbf{W}$. We first use a regularized term $\left\| \mathbf{F}'\mathbf{F}-\mathbf{I}_c\right\|_F^2$ to replace the orthogonal constraint of $\mathbf{F}$ with a tuning parameter $\gamma$  and rewrite the objective function (\ref{golfs}) of the GOLFS as
\begin{equation}
\label{Tansformed}
\mathcal{L}(\mathbf{F,W}) = Tr[\mathbf{F}'(\mathbf{L}_1+\lambda\mathbf{L}_0)\mathbf{F}]
+\alpha( \left\| \mathbf{X}\mathbf{W}-\mathbf{F}\right\|_F^2+\beta\left\|\mathbf{W}\right\|_{2,1}) + \frac{\gamma}{2}\left\|\mathbf{F}'\mathbf{F}-\mathbf{I}_c\right\|_F^2.
\end{equation}
So the objective function of GOLFS (\ref{golfs}) can be written as
\begin{align}
\label{GOLFS2}
\min_{\mathbf{F},\mathbf{W}} \mathcal{L}(\mathbf{F},\mathbf{W}), ~~~
s.t. ~ \mathbf{F}\geq 0.
\end{align}
The first order condition indicates that
\begin{equation}
\frac{\partial \mathcal{L}(\mathbf{F},\mathbf{W})}{\partial \mathbf{W}}=2\alpha [\mathbf{X}'(\mathbf{X}\mathbf{W}-\mathbf{F})+\beta\mathbf{D}\mathbf{W}]=0,
\end{equation}
where $\mathbf{D}$ is a diagonal matrix with $D_{ii}=\frac{1}{\left\|2\mathbf{w}_i\right\|_2}$.
Thus, the updated $\mathbf{W}$ can be finally written as
$
\mathbf{W}=(\mathbf{X}'\mathbf{X}+\beta\mathbf{D})^{-1}\mathbf{X}'\mathbf{F}.
$
Substituting this expression of $\mathbf{W}$ into the objective function (\ref{GOLFS2}), we can rewrite it as
\begin{align}
\label{214}
\min_{\mathbf{F}} Tr[\mathbf{F}'(\mathbf{L}_1+\lambda\mathbf{L}_0)\mathbf{F}] +Tr(\mathbf{F}'\mathbf{MF}) + \frac{\gamma}{2}\left\| \mathbf{F}'\mathbf{F}-\mathbf{I}_c\right\|_F^2,
s.t. ~\mathbf{F}\geq 0,
\end{align}
where $\mathbf{M}=\alpha[\mathbf{I}_n-\mathbf{X}(\mathbf{X'}\mathbf{X}+\beta\mathbf{D})^{-1}\mathbf{X}']$ and $\mathbf{I}_n\in  \mathbb{R}^{n\times n}$ is an identity matrix. Following \cite{Ld2012u},  we use the Lagrange method to eliminate the positive constrains and obtain the Lagrange function
\begin{align}
\phi(\mathbf{F},\mathbf{\Phi}) = Tr[\mathbf{F}'(\mathbf{L}_1+\lambda\mathbf{L}_0)\mathbf{F}]  +Tr(\mathbf{F}'\mathbf{MF}) + \frac{\gamma}{2}\left\| \mathbf{F}'\mathbf{F}-\mathbf{I}_c\right\|_F^2 + Tr(\bm{\Phi}\mathbf{F}'),
\end{align}
where $\bm{\Phi}$ is the matrix of Lagrange multipliers. Similar to \cite{Ld2012u}, by solving the first order condition and using the Karush-Kuhn-Tucker condition \citep{KT1951}, we can generate the update of $\mathbf{F}$ by
\begin{equation}
\label{ }
\mathbf{F}_{ij} \leftarrow \frac{(\gamma\mathbf{F})_{ij}}{(\mathbf{L}_1\mathbf{F}+\lambda\mathbf{L}_0\mathbf{F}+\mathbf{MF}+\gamma\mathbf{F}\mathbf{F}'\mathbf{F})_{ij}} \mathbf{F}_{ij}.
\end{equation}

\begin{algorithm}
\caption{GlObal and Local info. combined Feature Selection (GOLFS)}
\label{GOLFS}
\textbf{Input}:  $\mathbf{X} \in \mathbb{R}^{d\times n}$, $\lambda, \alpha, \beta, \gamma, \kappa$ \\
\textbf{Stage 1}: Learning the global correlation structure of samples
\begin{algorithmic}[1]
	\State set $t=1$; initialize $\boldsymbol{G}_{1}^{0}$  and $\boldsymbol{G}_{2}^{0}$ \\
\textbf{repeat}
	\State ~~~~~~ $\boldsymbol{P}^{t+1}=\left[\left(\boldsymbol{G}_{2}^{t}\right)^{-1} \boldsymbol{X} \boldsymbol{G}_{1}^{t} \boldsymbol{X}'+\kappa \boldsymbol{I}_n\right]^{-1}\left[\left(\boldsymbol{G}_{2}^{t}\right)^{-1} \boldsymbol{X}  \boldsymbol{G}_{1}^{t} \boldsymbol{X}' \right]$
	\State ~~~~~~ update $\boldsymbol{G}_{1}^{t+1}$  and $\boldsymbol{G}_{2}^{t+1}$ based on $\boldsymbol{P}^{t+1}$
	\State ~~~~~~ $t=t+1$
	\State \textbf{until} the convergence criterion satisfied
	\State construct the symmetric global similarity matrix $\mathbf{S_1}$ and the corresponding Laplace matrix $\mathbf{L_1}$
\end{algorithmic}

\textbf{Stage 2}:  Learn pseudo labels and select important features
\begin{algorithmic}[1]
    \State  initialize Laplace matrix $ \mathbf{L}_0$ based on k-adjacent matrix
    \State set $t=1$; initialize $\mathbf{F}^t$, $\mathbf{W}^t$, $\mathbf{D}^t$ \\
\textbf{repeat}
	\State ~~~~~~ $\mathbf{M}^t=\alpha[\mathbf{I}_n-\mathbf{X}(\mathbf{X'}\mathbf{X}+\beta\mathbf{D}^t)^{-1}\mathbf{X}']$
	\State ~~~~~~ $\mathbf{F}_{ij}^{t+1} = \frac{(\gamma\mathbf{F}^t)_{ij}}{(\mathbf{L_1F}^t+\lambda\mathbf{L_0}\mathbf{F}^t+\mathbf{MF}^t+\gamma \mathbf{F}^t (\mathbf{F}^t)'   \mathbf{F}^t)_{ij}} \mathbf{F}^t_{ij}$
	\State ~~~~~~ $\mathbf{W}^{t+1} = (\mathbf{X}\mathbf{X}'+\beta\mathbf{D}')^{-1}\mathbf{X}\mathbf{F}^{t+1}$
	\State ~~~~~~ update the diagonal matrix $\mathbf{D}^{t+1}$ with $D^{t+1}_{ii}=\frac{1}{2 \left\| \mathbf{w}_{i}^{t+1}\right\|_2}$
	\State ~~~~~~ $t=t+1$
	\State \textbf{until} the convergence criterion satisfied
\end{algorithmic}
\textbf{Output}: $\mathbf{W}$ and ordered features based on $\left\|\mathbf{w}_i\right\|_2$
\end{algorithm}

Algorithm 1 in the following summarizes the details of the GOLFS procedure. In the following Proposition 1, we demonstrate the convergence of the algorithm by showing that the transformed objective function $\mathcal{L}(\mathbf{F}, \mathbf{W})$ monotonically decreases in the second stage of the iterative algorithm. The proof can be found in the Appendix. It guarantees that we can obtain the local optimal $\mathbf{W}$ by Algorithm \ref{GOLFS}. Experiments based on simulations and real-world data sets support that the algorithm converges quickly.
\begin{proposition}
\label{CT}
The value of the transformed objective function $\mathcal{L}(\mathbf{F, W})$ of the GOLFS algorithm defined in (\ref{Tansformed}) monotonically decreases in each iteration with the updating rules proposed in Algorithm \ref{GOLFS}. That is, for $t$-th iteration,
$$
\mathcal{L}(\mathbf{F}^{t+1}, \mathbf{W}^{t+1}) \leq \mathcal{L}(\mathbf{F}^{t+1},\mathbf{W}^t) \leq \mathcal{L}(\mathbf{F}^{t},\mathbf{W}^t).
$$
\end{proposition}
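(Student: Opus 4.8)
The plan is to treat the second stage of Algorithm~\ref{GOLFS} as block majorization--minimization, alternating a $\mathbf{W}$-step and an $\mathbf{F}$-step, and to read off the two inequalities of the displayed chain from the two blocks. The workhorse is the surrogate of the transformed objective $\mathcal{L}$ in~(\ref{Tansformed}) obtained by linearizing its only non-smooth term: with the reweighting $\mathbf{D}^t$ frozen ($D^t_{ii}=1/(2\|\mathbf{w}^t_i\|_2)$) and $\beta\|\mathbf{W}\|_{2,1}$ replaced by its tangent majorant $\tfrac{\beta}{2}\|\mathbf{W}^t\|_{2,1}+\beta\,\mathrm{Tr}(\mathbf{W}'\mathbf{D}^t\mathbf{W})$, write $\mathcal{J}_t(\mathbf{F},\mathbf{W})$ for the result. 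The scalar concavity inequality $\sqrt a\le\tfrac12\sqrt b+\tfrac{a}{2\sqrt b}$ for $a,b>0$ (i.e.\ $(\sqrt a-\sqrt b)^2\ge0$), used rowwise with $a=\|\mathbf{w}_i\|_2^2$, $b=\|\mathbf{w}^t_i\|_2^2$ and summed, gives $\mathcal{J}_t\ge\mathcal{L}$ everywhere and $\mathcal{J}_t(\mathbf{F},\mathbf{W}^t)=\mathcal{L}(\mathbf{F},\mathbf{W}^t)$ for every $\mathbf{F}$ (using $\mathrm{Tr}((\mathbf{W}^t)'\mathbf{D}^t\mathbf{W}^t)=\tfrac12\|\mathbf{W}^t\|_{2,1}$). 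Eliminating $\mathbf{W}$ from $\mathcal{J}_t$ at fixed $\mathbf{F}$ is a strictly convex ridge problem with solution $\mathbf{W}=(\mathbf{X}'\mathbf{X}+\beta\mathbf{D}^t)^{-1}\mathbf{X}'\mathbf{F}$, and back-substitution returns, up to the $\mathbf{F}$-free constant $\tfrac{\alpha\beta}{2}\|\mathbf{W}^t\|_{2,1}$, exactly the reduced objective $\mathcal{G}_t(\mathbf{F})=\mathrm{Tr}[\mathbf{F}'(\mathbf{L}_1+\lambda\mathbf{L}_0+\mathbf{M}^t)\mathbf{F}]+\tfrac\gamma2\|\mathbf{F}'\mathbf{F}-\mathbf{I}_c\|_F^2$ of~(\ref{214}) --- the object the multiplicative $\mathbf{F}$-rule is built from.

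First I would dispatch the $\mathbf{W}$-step, i.e.\ the inequality $\mathcal{L}(\mathbf{F}^{t+1},\mathbf{W}^{t+1})\le\mathcal{L}(\mathbf{F}^{t+1},\mathbf{W}^t)$. With $\mathbf{F}=\mathbf{F}^{t+1}$ fixed only $\alpha(\|\mathbf{XW}-\mathbf{F}\|_F^2+\beta\|\mathbf{W}\|_{2,1})$ varies, and $\mathbf{W}^{t+1}=(\mathbf{X}'\mathbf{X}+\beta\mathbf{D}^t)^{-1}\mathbf{X}'\mathbf{F}^{t+1}$ is the unique minimizer of $\mathbf{W}\mapsto\mathcal{J}_t(\mathbf{F}^{t+1},\mathbf{W})$; hence $\mathcal{L}(\mathbf{F}^{t+1},\mathbf{W}^{t+1})\le\mathcal{J}_t(\mathbf{F}^{t+1},\mathbf{W}^{t+1})\le\mathcal{J}_t(\mathbf{F}^{t+1},\mathbf{W}^t)=\mathcal{L}(\mathbf{F}^{t+1},\mathbf{W}^t)$. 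Equivalently and more concretely, the closed form gives $\|\mathbf{XW}^{t+1}-\mathbf{F}^{t+1}\|_F^2+\beta\,\mathrm{Tr}((\mathbf{W}^{t+1})'\mathbf{D}^t\mathbf{W}^{t+1})\le\|\mathbf{XW}^{t}-\mathbf{F}^{t+1}\|_F^2+\beta\,\mathrm{Tr}((\mathbf{W}^{t})'\mathbf{D}^t\mathbf{W}^{t})$, and adding the rowwise bound $\|\mathbf{w}^{t+1}_i\|_2-\|\mathbf{w}^{t+1}_i\|_2^2/(2\|\mathbf{w}^t_i\|_2)\le\tfrac12\|\mathbf{w}^t_i\|_2$, summed over $i$, cancels the weighted quadratics and leaves the comparison of $\|\mathbf{XW}-\mathbf{F}^{t+1}\|_F^2+\beta\|\mathbf{W}\|_{2,1}$. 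This step is short.

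Next comes the $\mathbf{F}$-step, i.e.\ $\mathcal{L}(\mathbf{F}^{t+1},\mathbf{W}^t)\le\mathcal{L}(\mathbf{F}^t,\mathbf{W}^t)$, whose core is a monotonicity lemma for the multiplicative rule: $\mathcal{G}_t(\mathbf{F}^{t+1})\le\mathcal{G}_t(\mathbf{F}^t)$. Granting it, combining with the surrogate identities $\mathcal{L}(\mathbf{F},\mathbf{W}^t)=\mathcal{J}_t(\mathbf{F},\mathbf{W}^t)$ and $\mathcal{G}_t(\cdot)+\tfrac{\alpha\beta}{2}\|\mathbf{W}^t\|_{2,1}=\min_{\mathbf{W}}\mathcal{J}_t(\cdot,\mathbf{W})$ yields the $\mathbf{F}$-step inequality, and with Step~1 the displayed chain; moreover the clean chain $\mathcal{L}(\mathbf{F}^{t+1},\mathbf{W}^{t+1})\le\mathcal{J}_t(\mathbf{F}^{t+1},\mathbf{W}^{t+1})=\mathcal{G}_t(\mathbf{F}^{t+1})+\tfrac{\alpha\beta}{2}\|\mathbf{W}^t\|_{2,1}\le\mathcal{G}_t(\mathbf{F}^t)+\tfrac{\alpha\beta}{2}\|\mathbf{W}^t\|_{2,1}\le\mathcal{J}_t(\mathbf{F}^t,\mathbf{W}^t)=\mathcal{L}(\mathbf{F}^t,\mathbf{W}^t)$ delivers the monotone decrease, and since $\mathcal{L}\ge0$ (the Laplacian forms are nonnegative) the objective values converge. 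For the lemma I would build, as in the nonnegative-matrix-factorization arguments adopted by NDFS~\cite{Ld2012u}, an auxiliary function $Z(\mathbf{F},\widetilde{\mathbf{F}})$ with $Z\ge\mathcal{G}_t$ and $Z(\mathbf{F},\mathbf{F})=\mathcal{G}_t(\mathbf{F})$: decompose the entrywise sign-indefinite matrix $\mathbf{A}:=\mathbf{L}_1+\lambda\mathbf{L}_0+\mathbf{M}^t$ as $\mathbf{A}^+-\mathbf{A}^-$ with $\mathbf{A}^\pm\ge0$, majorize $\mathrm{Tr}(\mathbf{F}'\mathbf{A}^+\mathbf{F})$ by $\sum_{ij}(\mathbf{A}^+\widetilde{\mathbf{F}})_{ij}F_{ij}^2/\widetilde F_{ij}$, bound $-\mathrm{Tr}(\mathbf{F}'\mathbf{A}^-\mathbf{F})$ above by its concave ($\log$) tangent at $\widetilde{\mathbf{F}}$, majorize the quartic term $\tfrac\gamma2\,\mathrm{Tr}(\mathbf{F}'\mathbf{F}\mathbf{F}'\mathbf{F})$ by $\sum_{ij}(\widetilde{\mathbf{F}}\widetilde{\mathbf{F}}'\widetilde{\mathbf{F}})_{ij}F_{ij}^4/(2\widetilde F_{ij}^3)$, and linearize the concave part $-\gamma\,\mathrm{Tr}(\mathbf{F}'\mathbf{F})$; the minimizer of $Z(\cdot,\mathbf{F}^t)$ over $\mathbf{F}\ge0$ is precisely $\mathbf{F}^{t+1}$ as given by the quoted update, so $\mathcal{G}_t(\mathbf{F}^{t+1})\le Z(\mathbf{F}^{t+1},\mathbf{F}^t)\le Z(\mathbf{F}^t,\mathbf{F}^t)=\mathcal{G}_t(\mathbf{F}^t)$.

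The main obstacle is the construction and verification of the auxiliary function $Z$ for the $\mathbf{F}$-substep; the $\mathbf{W}$-step and the surrogate identities are a few lines each once $\mathrm{Tr}((\mathbf{W}^t)'\mathbf{D}^t\mathbf{W}^t)=\tfrac12\|\mathbf{W}^t\|_{2,1}$ and $\mathbf{M}^t\succeq0$ are noted. In detail, the $\mathbf{F}$-substep needs: (i) the positive/negative split to cope with the negative off-diagonal entries of the Laplacians $\mathbf{L}_1,\mathbf{L}_0$ and of $\mathbf{M}^t$; (ii) a valid separable majorant for the quartic term coming from the relaxed orthogonality penalty $\tfrac\gamma2\|\mathbf{F}'\mathbf{F}-\mathbf{I}_c\|_F^2$; (iii) verifying that the minimizer of the assembled surrogate factorizes into exactly the stated multiplicative formula and that Karush--Kuhn--Tucker complementarity for $\mathbf{F}\ge0$~\cite{KT1951} holds at its fixed points; and (iv) care that this update is built from the $\mathbf{W}$-eliminated objective~(\ref{214}) rather than from $\mathcal{L}(\cdot,\mathbf{W}^t)$ directly, so that the decrease of~(\ref{214}) is correctly transferred back to $\mathcal{L}$ through the tightness of the surrogate at $\mathbf{W}^t$. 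This is the same delicate construction as in the NDFS convergence proof, carried out here with the extra global Laplacian $\mathbf{L}_1$ present, and it carries essentially all the technical weight.
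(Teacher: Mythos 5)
Your overall strategy---block majorization--minimization, with the reweighted surrogate $\mathcal{J}_t$ handling the $l_{2,1}$ term and an auxiliary function handling the multiplicative $\mathbf{F}$-update---is exactly the paper's, which likewise rests on the Lee--Seung auxiliary-function lemma and the inequality $\sqrt{v}-v/(2\sqrt{v_t})\le\sqrt{v_t}-v_t/(2\sqrt{v_t})$. Your $\mathbf{W}$-step and the surrogate bookkeeping (tightness at $\mathbf{W}^t$ via $\mathrm{Tr}((\mathbf{W}^t)'\mathbf{D}^t\mathbf{W}^t)=\tfrac12\|\mathbf{W}^t\|_{2,1}$, elimination of $\mathbf{W}$ producing $\mathbf{M}^t$ and the reduced objective (\ref{214})) are correct, and your point (iv) about transferring the decrease of (\ref{214}) back to $\mathcal{L}$ is more careful than the paper, which simply asserts that the decrease of the $\mathbf{W}$-eliminated function $f(\mathbf{F})$ ``is equivalent to'' $\mathcal{L}(\mathbf{F}^{t+1},\mathbf{W}^t)\le\mathcal{L}(\mathbf{F}^t,\mathbf{W}^t)$; note that your clean chain, like the paper's argument, really delivers the end-to-end decrease $\mathcal{L}(\mathbf{F}^{t+1},\mathbf{W}^{t+1})\le\mathcal{L}(\mathbf{F}^t,\mathbf{W}^t)$, since $\mathcal{L}(\mathbf{F}^{t+1},\mathbf{W}^t)=\mathcal{J}_t(\mathbf{F}^{t+1},\mathbf{W}^t)$ is only bounded \emph{below} by $\mathcal{G}_t(\mathbf{F}^{t+1})$ plus the constant.

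The one place your plan would break is the concrete auxiliary function $Z$ for the $\mathbf{F}$-substep. The pieces you commit to---a $\log$-tangent upper bound for $-\mathrm{Tr}(\mathbf{F}'\mathbf{A}^-\mathbf{F})$ together with the quartic majorant $\sum_{ij}(\widetilde{\mathbf{F}}\widetilde{\mathbf{F}}'\widetilde{\mathbf{F}})_{ij}F_{ij}^4/(2\widetilde F_{ij}^3)$---lead to a stationarity equation mixing $F_{ij}^3$, $F_{ij}$ and $1/F_{ij}$, whose solution is a root-type update rather than the linear-ratio rule $F^{t+1}_{ij}=F^t_{ij}\,(\gamma\mathbf{F}^t)_{ij}/[\mathbf{L}_1\mathbf{F}^t+\lambda\mathbf{L}_0\mathbf{F}^t+\mathbf{M}\mathbf{F}^t+\gamma\mathbf{F}^t(\mathbf{F}^t)'\mathbf{F}^t]_{ij}$ that Algorithm \ref{GOLFS} actually performs; your verification step (iii) would therefore fail for that choice of $Z$. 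The paper instead takes $Z$ to be the entrywise quadratic $G(F_{ij},F^t_{ij})=f_{ij}(F^t_{ij})+f_{ij}'(F^t_{ij})(F_{ij}-F^t_{ij})+K_{ij}(\mathbf{F}^t)(F_{ij}-F^t_{ij})^2$ with curvature $K_{ij}(\mathbf{F}^t)=[\mathbf{L}_1\mathbf{F}^t+\lambda\mathbf{L}_0\mathbf{F}^t+\mathbf{M}\mathbf{F}^t+\gamma\mathbf{F}^t(\mathbf{F}^t)'\mathbf{F}^t]_{ij}/F^t_{ij}$, whose unconstrained minimizer is exactly the stated update; the entrywise positive/negative bounds you describe are then needed only to verify the majorization $G\ge f_{ij}$ (the step the paper dismisses as ``easy to check''). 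So: keep your framework, but replace your $Z$ by this quadratic surrogate and spend the technical effort on proving $K_{ij}$ dominates the relevant curvature.
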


\section{Simulation}
This section presents the simulation results for GOLFS with comparisons to four classical embedded methods. The finite sample performance of feature selection and clustering are evaluated in two different data-generating processes.

\subsection{Data generating process}
 We consider two datasets generated with some common settings: the sample size $n=200$, the number of true clusters $k=5$, and the number of features $p=1000$ containing $q=10$ true features. For each simulation example, true features and irrelevant features are generated independently. In each simulated example, we generate $40$ observations for each cluster and repeat the whole process 100 times.  
 
\vspace{0.3cm}

\noindent\textbf{Example 1}: All features are identically and independently distributed with normal distribution. We denote $X_{(k)}^q$ as the observed value of the $q$-th true feature and the $k$-th cluster. The true feature vectors are generated from the following normal distribution:
\begin{align}
\label{}
    X_{(k)}^q &\sim N(\mu_k, \sigma_q^2), ~ q=1,2,\cdot\cdot\cdot,10; ~ k=1,2,\cdot\cdot\cdot,5, \notag \\
    \mu_k &\sim U(1,10),~ k=1,2,\cdot\cdot\cdot,5, \\
    \sigma_k &\sim N(0,1),~q=1,2,\cdot\cdot\cdot,10. \notag
\end{align}
That is, the first 10 features of the observations in the same cluster are generated from a common mean and different variance. Then we merge the 200 observations generated (40  observations for each cluster) into the true feature matrix $\mathbf{X}_q^1\in \mathbb{R}^{200 \times 10}$ by row. The irrelevant features are generated by the following process:
\begin{align}
\label{}
    X^p &\sim N(\mu_p, \sigma_p^2), p=11,12,\cdot\cdot\cdot,1000, \notag  \\
    \mu_p &\sim U(1,10),~ p=11,12,\cdot\cdot\cdot,1000, \\
    \sigma_p &\sim N(0,1),~p=11,12,\cdot\cdot\cdot,1000. \notag
\end{align}
That is, the last 990 irrelevant features of the observations contain no cluster information. If we denote the irrelevant feature matrix as $\mathbf{X}_p^1\in \mathbb{R}^{200 \times 990}$, then the dataset of Example 1 is $\mathbf{X}^1 \in \mathbb{R}^{200 \times 1000}$ by merging $\mathbf{X}_q^1$ and $\mathbf{X}_p^1$ by column.

\vspace{0.3cm}
\noindent\textbf{Example 2}: All features are normally distributed with AR(1) correlations. $\mathbf{X}_{(k)}$ denotes the observed values for the $k$-th cluster and $(\bm{\mu}_{k})_q$ represent the $q$-th element of the mean vector for the $k$-th cluster. The true features are generated from the following multivariate normal distribution:
\begin{align}
\label{}
    \mathbf{X}_{(k)} & \sim N(\bm{\mu}_k, \bm{\Sigma}) ~ k=1,2,\cdot\cdot\cdot,5, \notag \\
    (\bm{\mu}_{k})_q & \sim U(1,10),~ k=1,2,\cdot\cdot\cdot,5 ; ~ q=1,2,\cdot\cdot\cdot,10, \\
    \Sigma_{ij} &  = 0.5^{\lvert i-j\lvert},~i,j=1,2,\cdot\cdot\cdot,10.\notag
\end{align}
That is, the first 10 features of the observations are generated from a multi-normal distribution with a common covariance structure and different mean vectors for different clusters. Then we merge the data generated for different clusters by rows into the true feature matrix $\mathbf{X}_q^2\in\mathbb{R}^{200 \times 10}$. The irrelevant features are generated by the following process:
\begin{align}
\label{}
    \mathbf{X}^p & \sim N(\bm{\mu}^p, \bm{\Sigma}^p), \notag \\
    (\bm{\mu}^p)_q & \sim U(1,10), ~ q=11,12,\cdot\cdot\cdot,1000, \\
    \bm{\Sigma}^p_{ij} & = 0.5^{\lvert i-j\lvert}, ~ i,j=11,12,\cdot\cdot\cdot,1000, \notag
\end{align}
where $(\bm{\mu}^p)_q$ is the $q$-th element of vector $\bm{\mu}^p$. The simulated irrelevant features is denoted as $\mathbf{X}_p^2\in \mathbb{R}^{ 200\times 990}$. The dataset of {Example 2} is $\mathbf{X}^2 \in \mathbb{R}^{ 200\times 1000}$ generated by merging $\mathbf{X}_q^2$ and $\mathbf{X}_p^2$ by columns.

\subsection{Evaluation criteria}
As we know the subset of true features in simulations, the number of true features selected (True Positive, TP), and the probability to cover all true features (Coverage Probability, CP) within the top $s\in\{10,30,60\}$ features selected are used to evaluate the feature selection performance of the UFS methods. For the simulation examples with $m$ repeats, the criteria can be computed as
\begin{align}
    TP(s) &= \frac{1}{m} \sum_{k=1}^m \sum_{j=1}^{s} I\left(T_{j(k)}\subset T\right),\\
    CP(s) &= \frac{1}{m} \sum_{k=1}^m I\left(\bigcup_{j=1}^{s} T_{j(k)}\supset T\right),
\end{align}
where $T$ denotes the set of true features, $T_{j(k)}$ denotes the $j$-th top feature selected in the $k$-th repeat and $I(\cdot)$ is an indicator function.

Three commonly used evaluation metrics of clustering results are clustering accuracy (ACC), normalized mutual information (NMI), and adjusted Rand index (ARI). Larger values of ACC, NMI, and ARI indicate better performance for clustering. The clustering accuracy can be computed as
\begin{equation}
\label{ }
ACC=\frac{1}{n}\sum_{i=1}^n \delta(c_i, map(c_i')),
\end{equation}
where $n$ is the sample size, $c_i$ is the true label of $i$-th sample and $map(\cdot)$ is a best mapping from clustering label $c_i'$ to true labels. $\delta(x,y)=1$ when $x=y$ and $\delta(x,y)=0$ otherwise.

Normalized mutual information can be computed as
\begin{align}
\label{}
    NMI(C,C')= \frac{MI(C,C')}{H(C)H(C')},
\end{align}
where $C$ and $C'$ represent the sets of the true labels and the clustering labels, respectively, the numerator is the mutual information of $C$ and $C'$ $MI(C,C') = \sum_{c_i\in C, c_j\in C'} P(c_i,c_j)\log_2[P_{C,C'}(c_i,c_j)/P_C(c_i)P_{C'}(c_j)]$, $P_{C}(\cdot)$ and $P_{C'}(\cdot)$ are the marginal distribution of true labels and clustering labels respectively and $P_{C,C'}(\cdot)$ is the corresponding joint distribution, $H(C)$ and $H(C')$ are the entropies of $C$ and $C'$, respectively. $NMI(C,C')$ belongs to $[0,1]$ and measures how much the two sets of labels coincide with each other.

\subsection{Simulation results}

To validate the finite sample performance of GOLFS for both feature selection and clustering, we compare it with the four classical embedded UFS methods: MCFS \citep{Cd2010}, UDFS \citep{Yd2011}, NDFS \citep{Ld2012u} and RSR \citep{Zd2015}. Besides, we also use the K-Means method
using all features without feature selection as a benchmark method. We set $k=5$ for all methods to construct the neighbor graph and use the grid search method to choose tuning parameters with optimal clustering performance. The simulation results for finite performances of feature selection are summarized in Table \ref{Simulation results}. Firstly, the GOLFS has the dominant performance in feature selection in terms of TP and CP. As the results show, the average number of true features selected by the GOLFS is significantly larger and the probabilities to cover all true features are higher than the other UFS methods. Secondly, when data contains some correlation structure, the GOLFS has robust performances in feature selection compared to the NDFS due to the additional consideration of the global correlation structure of data samples.

\begin{table}[h]
\caption{Performances of feature selection. TP denotes the true positives and CP denotes the coverage probability of all true features in the top $s\in\{10,30,60\}$ selected features.
}
 \label{Simulation results}
 \centering
\begin{tabular}{@{}crrrrrrr@{}}
\toprule
 & \textbf{Methods} & \textbf{TP(10)} & \textbf{TP(30)} & \textbf{TP(60)} & \textbf{CP(10)} & \textbf{CP(30)} & \textbf{CP(60)} \\ \midrule
\multirow{5}{*}{Example 1}            & MCFS             & 3.98          & 4.94          & 5.64          & 0.01          & 0.02          & 0.04          \\
                                         & UDFS             & 2.4           & 3.52          & 4.24          & 0             & 0             & 0             \\
                                         & NDFS             & 2.23          & 3.22          & 5.42          & 0             & 0             & 0.01          \\
                                         & RSR              & 0             & 0             & 0             & 0             & 0             & 0             \\
                                         & \textbf{GOLFS}             & \textbf{8.81} & \textbf{9.29} & \textbf{9.48} & \textbf{0.77} & \textbf{0.88} & \textbf{0.9}  \\ \midrule
\multirow{5}{*}{Example 2}            & MCFS             & 3.38          & 4.43          & 5.04          & 0             & 0.01          & 0.01          \\
                                         & UDFS             & 2.35          & 3.31          & 4.03          & 0             & 0             & 0             \\
                                         & NDFS             & 1.25          & 2.22          & 3.85          & 0             & 0             & 0             \\
                                         & RSR              & 0             & 0             & 0.01          & 0             & 0             & 0             \\
                                         & \textbf{GOLFS}             & \textbf{6.18} & \textbf{7.64} & \textbf{8.25} & \textbf{0.36} & \textbf{0.61} & \textbf{0.72} \\ \bottomrule
\end{tabular}
\end{table}

In practice, the subset of true features is often not available. The accuracy of clustering based on the selected features is commonly used to evaluate the performance of feature selection. We use ACC and NMI of the $K$-means clustering results as evaluations for different UFS methods. The dimension of the top selected features is set to be $\{10,20,\cdot\cdot\cdot,150\}$, and the averages of evaluation criteria are computed. Similar to the settings in \cite{Yd2011} and \cite{Ld2012}, we tune the parameters for each feature selection algorithm from $\{10^{-6}$, $10^{-5}$, $\cdot\cdot\cdot$, 1 ,$\cdot\cdot\cdot$,$10^{5}$,$10^{6}\}$ and record the optimal performances. The evaluation processes are repeated for $20$ times in the $K$-means clustering. The simulation results for performance in clustering are summarized in Table \ref{Simulation results2}. As the results show, the GOLFS has better finite sample performances with high accuracy in clustering compared to existing embedded UFS methods.
It demonstrates that it is of great importance to consider the global correlations among data samples to improve the performance of the embedded UFS method. It is also interesting to observe that the RSR which is only based on the correlation structure among features can not perform well in either feature selection or clustering. In conclusion, the combination of the
local geometric structure via manifold learning and the global correlation structure of data samples via regularized self-representation in the GOLFS algorithm is able to improve the accuracy of both feature selection and clustering.

\begin{table}[h]
\caption{The comparison of embedded UFS methods in performances of clustering. ACC denotes the clustering accuracy, NMI denotes the normalized mutual information and ARI denotes the adjusted Rand index. The mean evaluations are reported for different methods and the corresponding standard deviations are presented in parentheses.}
\label{Simulation results2}
\centering
\small
\centering
 
\begin{tabular}{@{}ccccc@{}}
\toprule
\textbf{Simulations}       & \textbf{Method} & \textbf{NMI }     & \textbf{ACC ($\%$)}      & \textbf{ARI }     \\ \midrule
\multirow{6}{*}{Example 1} & Benchmark         & 0.105(0.019)             & 31.62 (2.79)           & 0.033 (0.012)            \\
                           & MCFS                                  & 0.057 (0.008)           & 30.37 (2.09)           & 0.003 (0.007)           \\
                           & NDFS                                  & 0.378 (0.022)            & 45.32 (2.73)           & 0.174 (0.018)          \\
                           & RSR                                   & 0.187 (0.021)             & 38.00 (2.55)            & 0.082 (0.018)           \\
                           & UDFS                                  & 0.064 (0.013)            & 31.08 (0.58)           & 0.007 (0.011)           \\
                           & \textbf{GOLFS}  & \textbf{0.384} (0.005) & \textbf{45.92 }(0.63)  & \textbf{0.179} (0.004) \\ \midrule
\multirow{6}{*}{Example 2} & Benchmark       & 0.307 (0.047)          & 43.00 (3.65)           & 0.144 (0.048)          \\
                           & MCFS             & 0.192 (0.060)          & 37.47 (3.61)           & 0.054 (0.048)           \\
                           & NDFS             & 0.517 (0.067)          & 53.67 (6.64)           & 0.304 (0.079)          \\
                           & RSR                & 0.495 (0.070)          & 55.32 (7.23)           & 0.300 (0.083)          \\
                           & UDFS              & 0.207 (0.061)          & 38.91 (5.08)           & 0.065 (0.069)           \\
                           & \textbf{GOLFS}  & \textbf{0.525} (0.058) & \textbf{55.13} (3.70) & \textbf{0.315} (0.049) \\ \bottomrule  
\end{tabular} 
 
\end{table}
 
\section{Real Data Analysis}

In this section, we apply our proposed method GOLFS to two real datasets to test the empirical performance in the applications of text clustering and image clustering.

\vspace{0.3cm}
\noindent\textbf{Example 3} (Text clustering): The BBC news dataset \footnote{Download this public dataset from https://storage.googleapis.com/laurencemoroney-blog.appspot.com/bbc-text.csv}  contains 200 reports of news. Each one of the reports contains $400$ words on average and belongs to one of the five topics, e.g. economy, politics, sports, technology, and entertainment. For this dataset with text sequences, we segment the words and use the evaluation ``term frequency-inverse document frequency" (TF-IDF) \citep{R2003} to transform the text sequences into a structured dataset with the features in columns and the observations in rows. TF-IDF is a popular measure in text mining that evaluates every word by its frequency in a document and its uniqueness among various documents. Thus, TF-IDF can efficiently identify the common words, stop words, and keywords of an article. After this process, we finally obtain the data matrix with the sample size $n=200$ and the dimension of features $p=2057$. Then, the GOLFS is used to select the true discriminative keywords and cluster the articles using K-means clustering based on the selected words.

\begin{figure}[H]
\begin{center}
  \subfigure[cluster 1]{
    \includegraphics[height=1.7in]{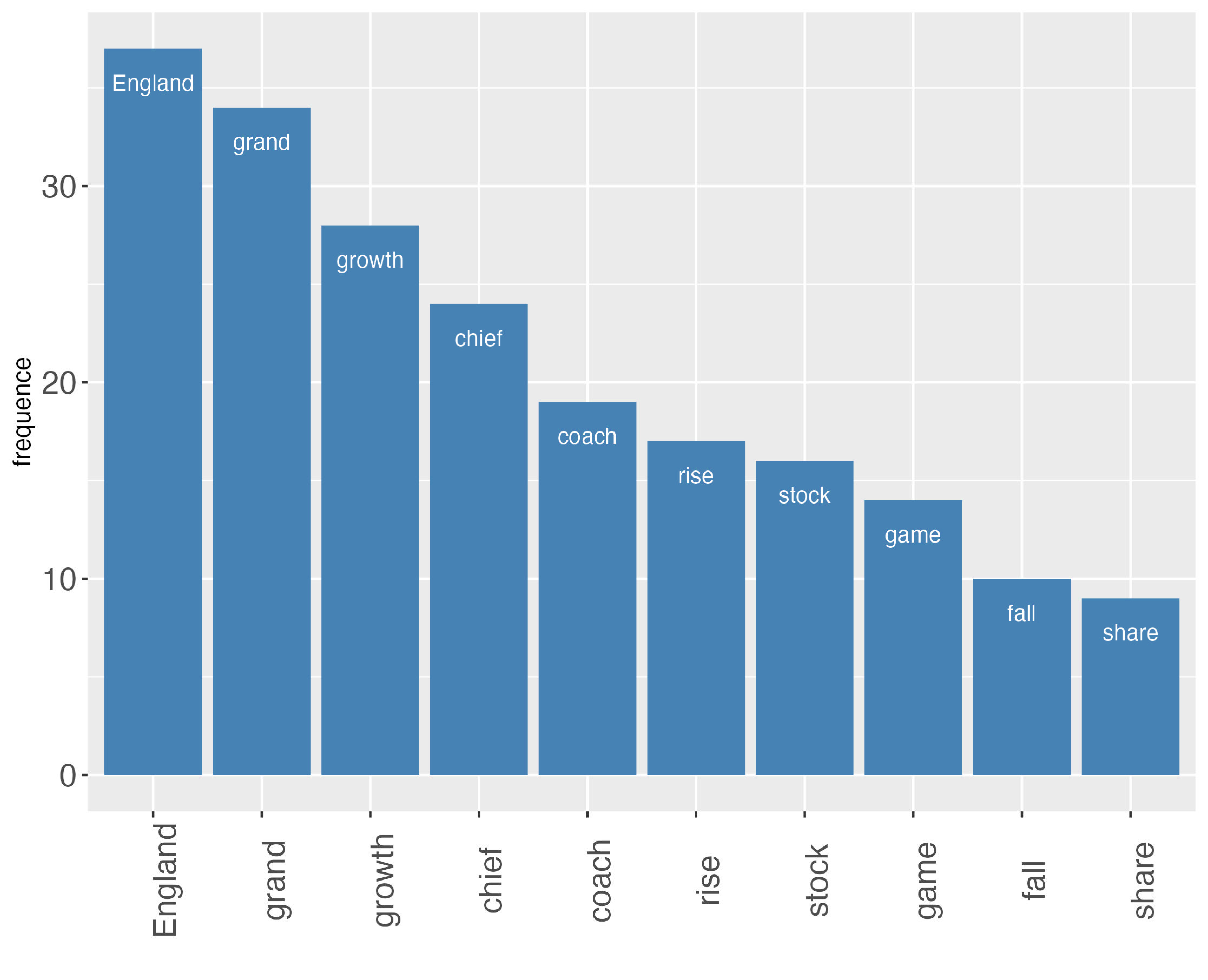}
  }
  \subfigure[cluster 2]{
    \includegraphics[height=1.7in]{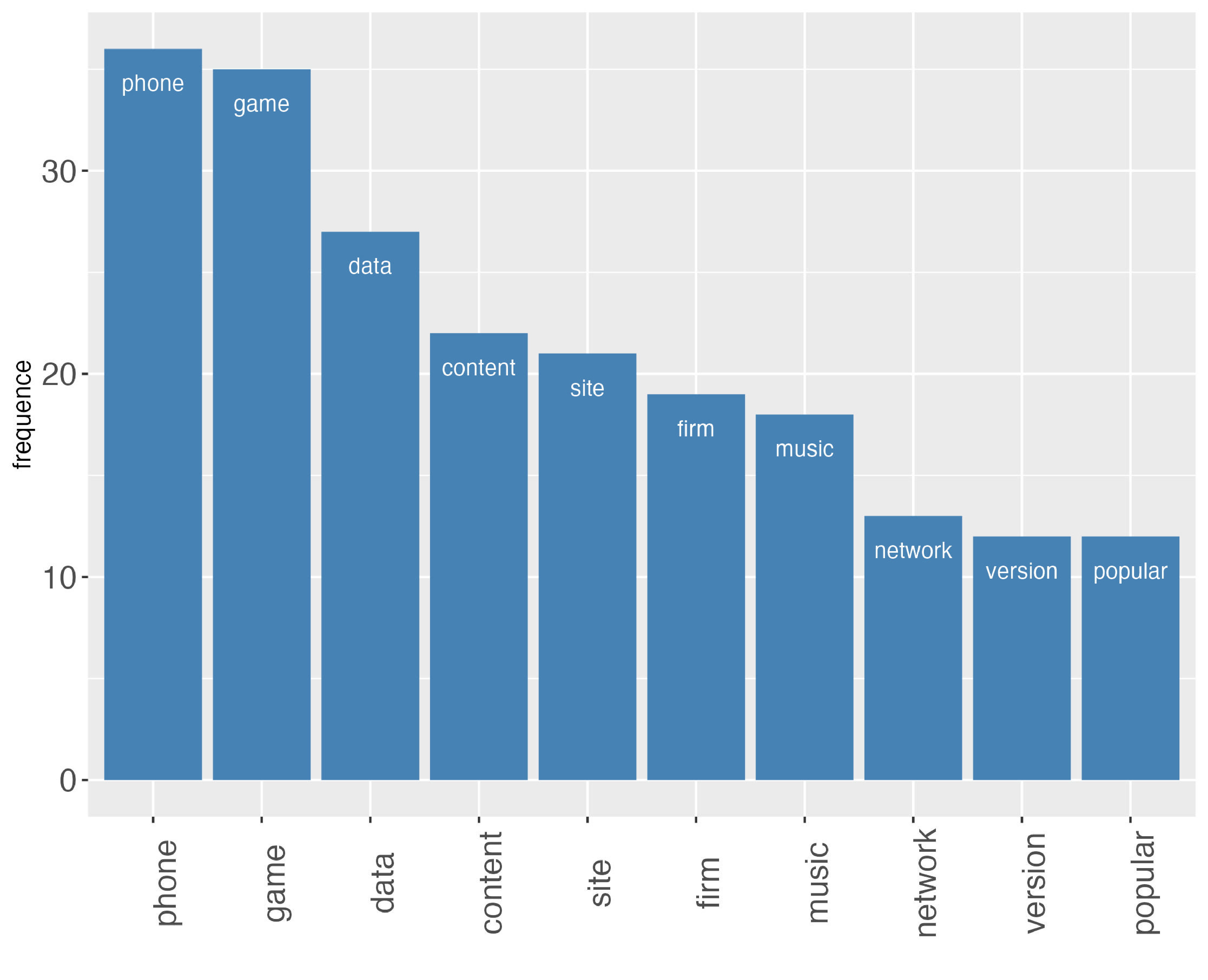}
  }
  \subfigure[cluster 3]{
    \includegraphics[height=1.7in]{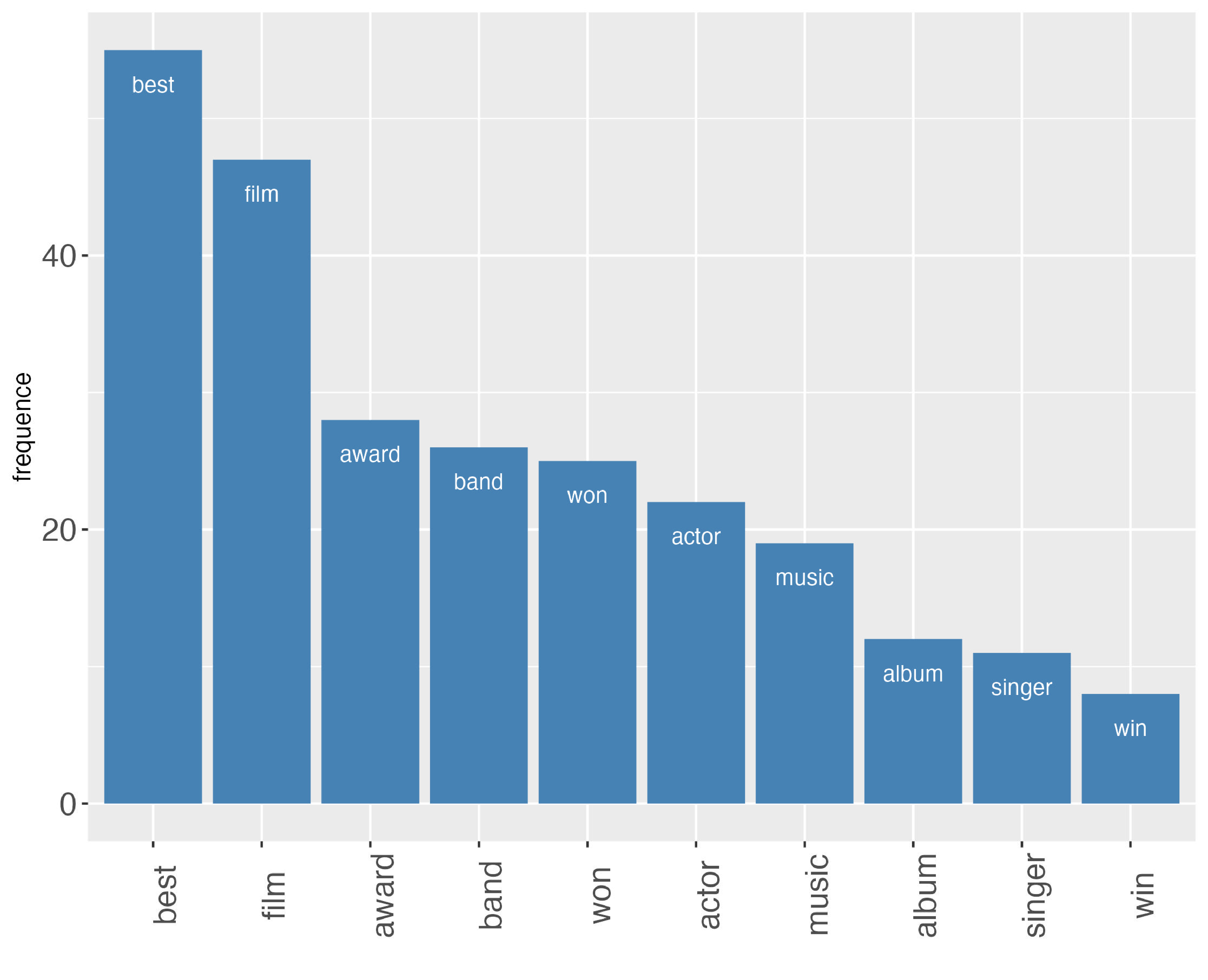}
  }
  \subfigure[cluster 4]{
    \includegraphics[height=1.7in]{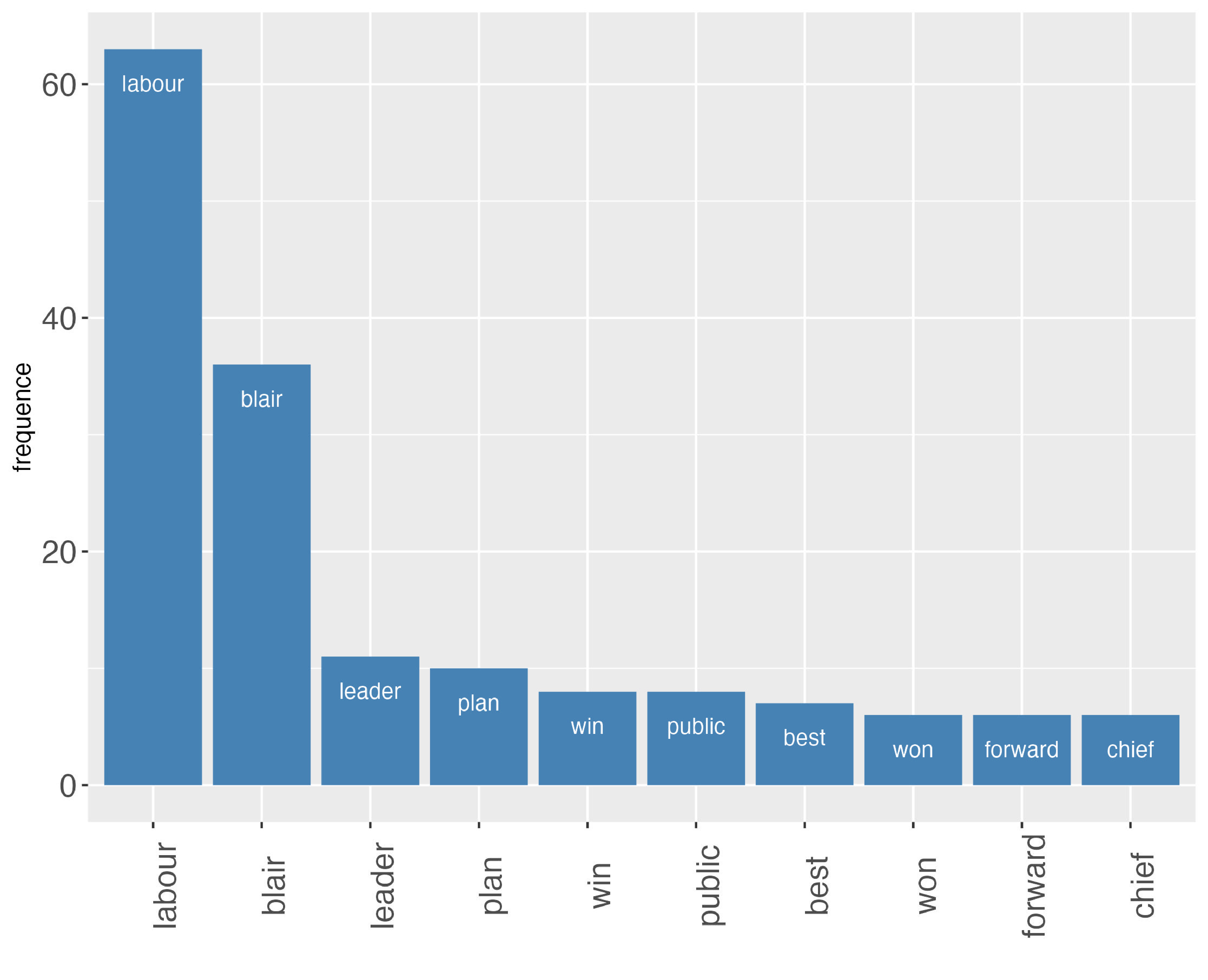}
  }
  \subfigure[cluster 5]{
    \includegraphics[height=1.7in]{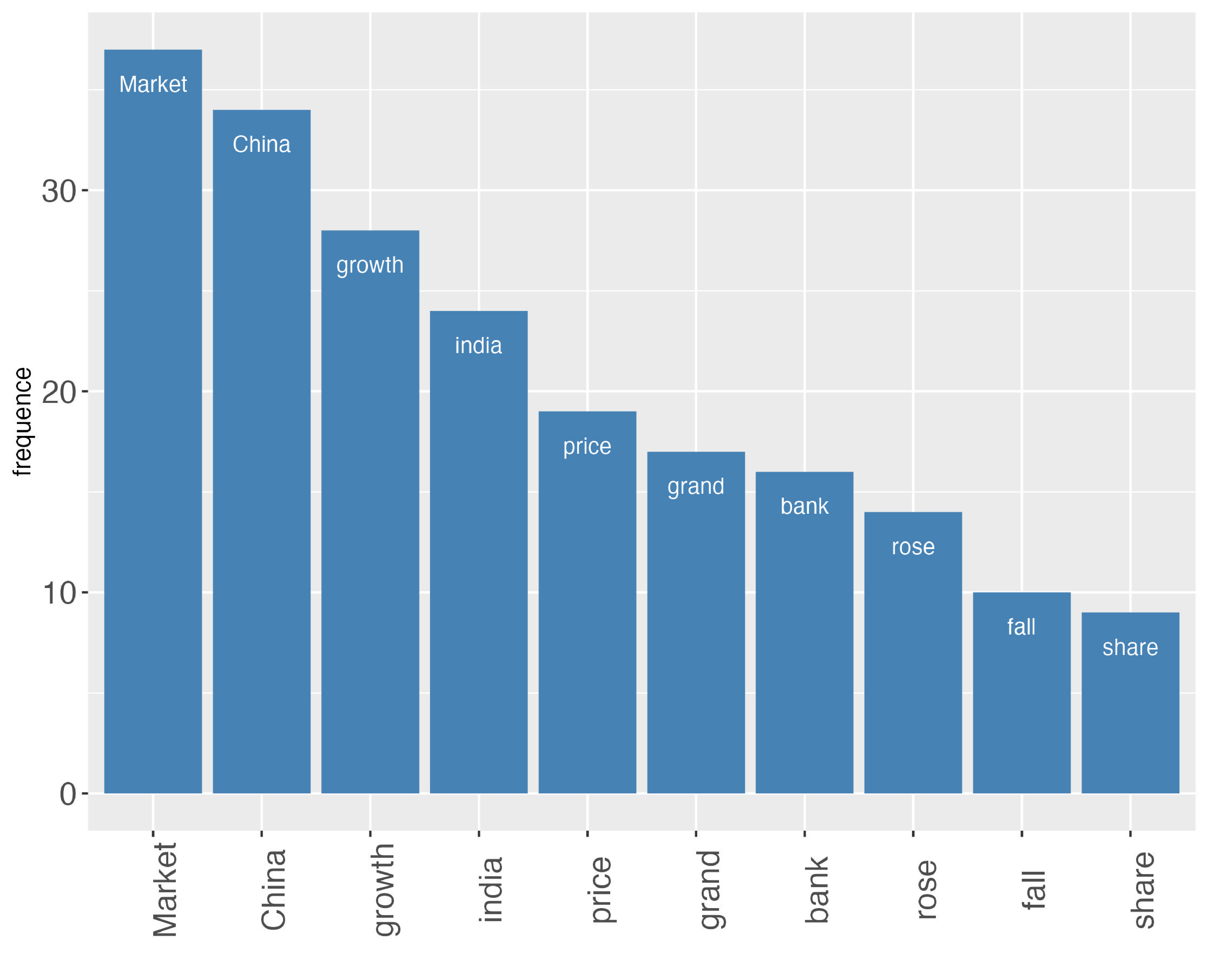}
  }
\end{center}
\caption{Word frequency of the top 10 keywords selected by the GOLFS for 5 clusters.}
\label{BBCfig}
\end{figure}

The important features selected by the GOLFS are shown as word clouds in Figure \ref{BBCfig}  with different frequency for 5 topics and the confusion matrix of clustering are shown in Table \ref{cfm}. We can find the keywords selected are discriminative and reasonable and the clustering results are good. The comparison to classic embedded methods is shown in Table \ref{Performances of clustering in real datasets}, and the results show the better performance of the GOLFS with the highest ACC, NMI and ARI.

\begin{table}[h]
\caption{Confusion matrix of clustering based on the selected features via the GOLFS}
\label{cfm}
\begin{center}
\begin{tabular}{@{}c|ccccc@{}}
\toprule
         & Sports  & Technology & Entertainment & Policy & Economy \\ \midrule
Cluster 1 & 18     & 0          & 0             & 0      & 0       \\
Cluster 2 & 0      & 24         & 0             & 0      & 0       \\
Cluster 3 & 0      & 0          & 19            & 0      & 0       \\
Cluster 4 & 0      & 0          & 0             & 19     & 0       \\
Cluster 5 & 20     & 15         & 20            & 22     & 43      \\ \bottomrule
\end{tabular}
\end{center}
\end{table}
 
\vspace{0.3cm}
\noindent\textbf{Example 4} (Image clustering): The Yale32 dataset \footnote{Download this public dataset from http://www.cad.zju.edu.cn/home/dengcai/Data/FaceData.html} contains $165$ face images showing different facial expressions or configurations of $15$ people. Each one of the raw face images shown in Figure \ref{Yale32} is stored as a matrix with $32\times 32=1024$ pixels. For image data, we treat every single pixel as a feature with continuous gray scales from $0$ to $255$ and transform the image matrix into a long vector. Finally, we transform the face image dataset into a data matrix with $n=165$ and $p=1024$. We apply the GOLFS algorithm to this transformed dataset to select the discriminative set of pixels and do the clustering for face images.

Figure \ref{recoveryOFYale} shows the reconstruction using top $300$, $600$, and $900$ important features selected by the GOLFS. When we reconstruct the images with only the top $300$ features, the selected pixels can almost cover the area of the eyes, nose and mouth although the whole image seems blurred. When we use the top $600$ features to reconstruct the images, the contours of a face are almost clear. This result shows that the GOLFS can effectively select the discriminative features in the image data.
According to the results summarized in Table \ref{Performances of clustering in real datasets}, the GOLFS has the dominant performance in clustering with the highest ACC, NMI and ARI.

\begin{figure}[H]
\begin{center}
\includegraphics[height=1.2in]{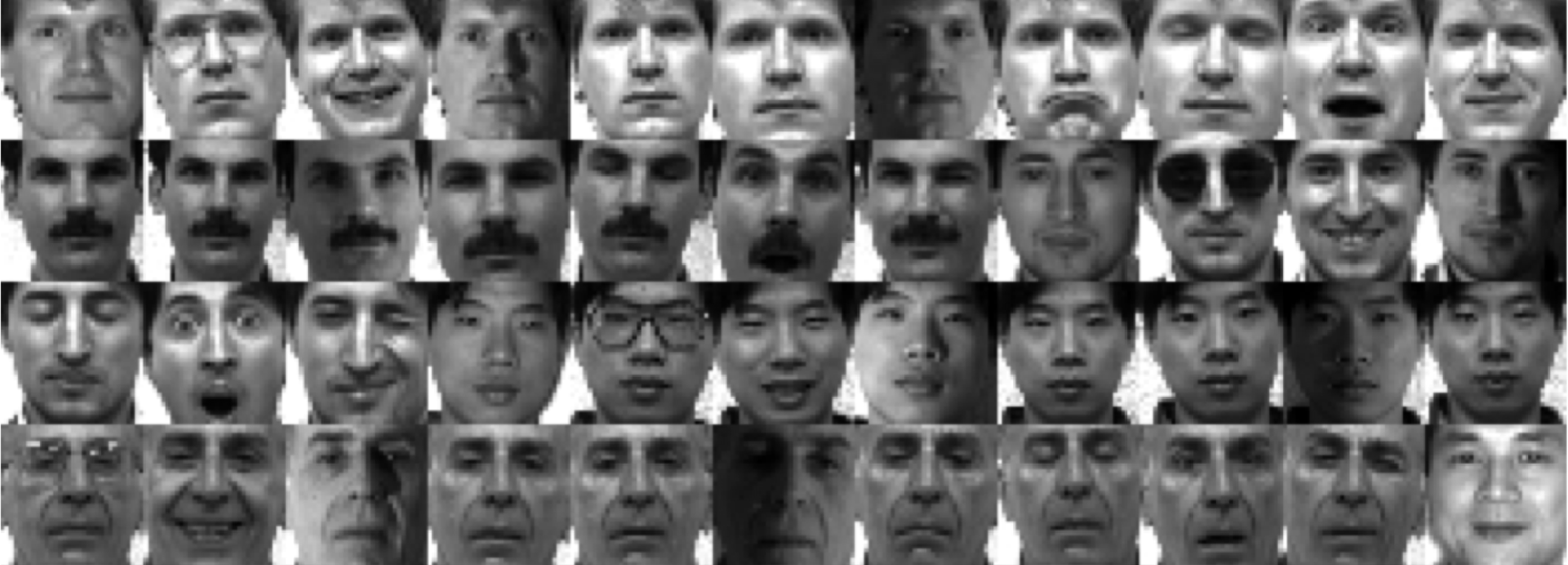}
		\caption{A sample of Yale32 image dataset. Each raw face image is stored as a $32\times 32$ matrix.}
		\label{Yale32}
\end{center}
\end{figure}

\begin{figure}[H]
\begin{center}
		\centering
		\includegraphics[height=1.2in]{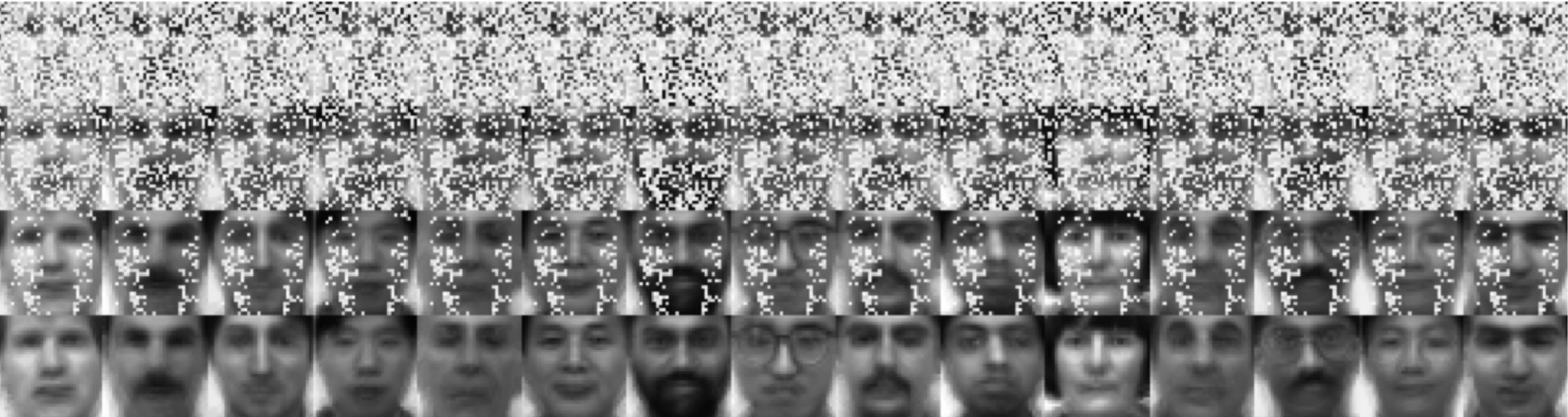}
		\caption{The reconstructed images using the top 300, 600 and 900 top features selected by the GOLFS. The last row denotes the original images of 15 people in the Yale32 dataset.}
		\label{recoveryOFYale}
\end{center}
\end{figure}

The comparisons based on two real datasets show the excellent finite sample performance of the GOLFS in clustering.
We tune the parameters via grid searching with the same settings in simulation and the best performances of methods from the optimal parameters are reported. 
For these two real datasets, the GOLFS and the NDFS perform much better than the other methods since the GOLFS inherits the advantage of the NDFS to capture the local geometric structure of samples. On the other hand, the GOLFS also performs better than the NDFS by additionally considering the global correlations of samples in the algorithm. The empirical results of the real data analysis support the advantage of the combination of both global and local information in the embedded UFS method.

\begin{table}[h]
\centering
\caption{The comparison of embedded UFS methods in real datasets. ACC denotes the clustering accuracy, NMI denotes the normalized mutual information and ARI denotes the adjusted Rand index.}
\label{Performances of clustering in real datasets}
\begin{tabular}{@{}ccccc@{}}
\toprule
\textbf{Dataset}      & \textbf{Methods} & \textbf{NMI}   & \textbf{ACC}   & \textbf{ARI}   \\ \midrule
\multirow{6}{*}{BBC}  & Benchmark        & 0.186          & 0.370          & 0.056          \\
                      & MCFS             & 0.233          & 0.365          & 0.054          \\
                      & NDFS             & 0.400          & 0.545          & 0.150          \\
                      & RSR              & 0.168          & 0.317          & 0.019          \\
                      & UDFS             & 0.233          & 0.358          & 0.061          \\
                      & \textbf{GOLFS}   & \textbf{0.427} & \textbf{0.580} & \textbf{0.176} \\
\midrule
\multirow{6}{*}{Yale} & Benchmark        & 0.177          & 0.042          & 0.045          \\
                      & MCFS             & 0.187          & 0.048          & 0.050          \\
                      & NDFS             & 0.220          & 0.048          & 0.057          \\
                      & RSR              & 0.236          & 0.056          & 0.067          \\
                      & UDFS             & 0.193          & 0.055          & 0.054          \\
                      & \textbf{GOLFS}   & \textbf{0.259} & \textbf{0.059} & \textbf{0.078} \\
\bottomrule
\end{tabular}
\end{table}

\section{Conclusion and Discussion}

In this paper, we propose a new embedded method for unsupervised feature selection named the GOLFS, which jointly considers both the local geometric structure via manifold learning and the global correlation structure of data samples via the regularized self-representation technique. With the combination of comprehensive information, the GOLFS is able to select the discriminative and representative features with high accuracy and achieve robust clustering results. Simulations and real data analysis have shown that the GOLFS can perform better than several existing embedded UFS methods. Although the performance of GOLFS is impressive, the tuning process may be relatively computational-costly for training. As the searching space of parameters is large and grid search for optimal parameters is not computationally efficient in many real applications, a quick searching strategy for tuning parameters in unsupervised learning could be an important topic.  Another future topic is developing statistical theorems for inference using unsupervised feature selection methods.

\vskip15pt

\newpage

\begin{appendices}

\section{Spectral clustering}

Spectral clustering is a method of clustering based on a graph. For an unlabeled dataset $\mathbf{X} \in \mathbb{R}^{d\times n}$ with $d$ features and $n$ observations, we usually define the distance between observation $\mathbf{x}_i$ and $\mathbf{x}_j$ as $s_{ij}$, which is usually the Euclidean distance. To show the local geometric structure of data, a graph based on the distances can be represented as an adjacency matrix $\mathbf{A} \in \mathbb{R}^{n\times n}$. The connections between observations are usually built by $k$-nearest neighbor method and the weighs $A_{ij}$ are given by $A_{ij}=\Phi(s_{ij})$, where $A_{ij}$ is the element of matrix $\mathbf{A}$ and $\Phi(\cdot)$ is usually Gaussian kernel function.

To find $k$ clusters is equivalent to cutting this graph into $k$ subgraphs. Intuitively, the loss of a cut between subsets $R_i$ and $R_j$ can be defined as $W(R_i, R_j) = \sum_{m\in R_i, n\in R_j} A_{mn}$. To make the cut of subgraph more robust, we defined the cut loss of $k$ subgraphs as weighted sum of every single cut loss $l(R_1,R_2,\cdot\cdot\cdot,R_k)=\frac{1}{2}\sum_{i=1}^k \frac{W(R_i,\bar{R}_i)}{\lvert R_i\lvert}$, where $\bar{R}_i$ represent the subgraph outside of $R_i$ and $ \lvert R_i\lvert$ is the number of points in the subgraph $R_i$. Then the optimization problem to minimize the total cut loss have an equivalent form as shown in Lemma 2.1.  Next, we provide the proof of Lemma 2.1.

We first introduce a lemma before we prove Lemma 2.1.
\begin{lemma}
\label{LM1}
For a symmetric matrix $\mathbf{A}$, define the row-summation matrix $\mathbf{D}=\left(\begin{array}{ccc}d_1 &  &  \\ & d_2 &  \\ &  &\ddots \end{array}\right)$, $d_i=\sum_{j=1}^n A_{ij}$, the corresponding Laplace matrix is $\mathbf{L}=\mathbf{D}-\mathbf{A}$. Laplace matrix is symmetric and semi-positive definite, and for any vector $\mathbf{f}$ :
$$
\mathbf{f}'\mathbf{Lf} = \frac{1}{2}\sum\limits_{i,j=1}^{n}A_{ij}(f_i-f_j)^2
$$
\end{lemma}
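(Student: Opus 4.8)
The plan is to verify the stated identity by a direct expansion and then obtain symmetry and positive semidefiniteness as immediate consequences. First I would split the quadratic form using $\mathbf{L} = \mathbf{D} - \mathbf{A}$, writing $\mathbf{f}'\mathbf{L}\mathbf{f} = \mathbf{f}'\mathbf{D}\mathbf{f} - \mathbf{f}'\mathbf{A}\mathbf{f} = \sum_{i=1}^n d_i f_i^2 - \sum_{i,j=1}^n A_{ij} f_i f_j$, where I use that $\mathbf{D}$ is diagonal with $d_i = \sum_{j=1}^n A_{ij}$; the first sum is then rewritten as $\sum_{i,j=1}^n A_{ij} f_i^2$.

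Next I would expand the target right-hand side: $\frac{1}{2}\sum_{i,j} A_{ij}(f_i - f_j)^2 = \frac{1}{2}\sum_{i,j} A_{ij} f_i^2 - \sum_{i,j} A_{ij} f_i f_j + \frac{1}{2}\sum_{i,j} A_{ij} f_j^2$. The one place where the hypothesis is used is the symmetry $A_{ij} = A_{ji}$: relabeling the dummy indices shows $\sum_{i,j} A_{ij} f_j^2 = \sum_{i,j} A_{ij} f_i^2$, so the two ``squared'' sums merge into $\sum_{i,j} A_{ij} f_i^2 = \sum_i d_i f_i^2$. Comparing with the expression obtained in the first step yields the identity. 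This index bookkeeping is the only point requiring attention, and it is routine.

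Finally, symmetry of $\mathbf{L}$ is immediate: $\mathbf{D}$ is diagonal and $\mathbf{A}$ is symmetric by assumption, so $\mathbf{L}' = \mathbf{D}' - \mathbf{A}' = \mathbf{D} - \mathbf{A} = \mathbf{L}$. For the semi-positive definiteness I would simply invoke the identity just established: since the edge weights satisfy $A_{ij} \geq 0$ (they arise from a Gaussian kernel) and each $(f_i - f_j)^2 \geq 0$, it follows that $\mathbf{f}'\mathbf{L}\mathbf{f} = \frac{1}{2}\sum_{i,j} A_{ij}(f_i - f_j)^2 \geq 0$ for every $\mathbf{f} \in \mathbb{R}^n$, which is precisely the definition of $\mathbf{L}$ being semi-positive definite. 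I do not anticipate any genuine obstacle; the entire argument is one short computation together with two one-line observations.
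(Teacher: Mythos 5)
Your proof is correct and follows essentially the same route as the paper's: expand $\mathbf{f}'\mathbf{L}\mathbf{f}=\mathbf{f}'\mathbf{D}\mathbf{f}-\mathbf{f}'\mathbf{A}\mathbf{f}$, use $d_i=\sum_j A_{ij}$ together with the symmetry of $\mathbf{A}$ to merge the squared terms, and recognize the result as $\frac{1}{2}\sum_{i,j}A_{ij}(f_i-f_j)^2$. The only difference is that you also spell out the symmetry and positive semidefiniteness of $\mathbf{L}$ (correctly noting that the latter needs $A_{ij}\geq 0$, which holds here since the weights come from a Gaussian kernel), whereas the paper leaves these claims implicit.
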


\begin{proof}[Proof of Lemma \ref{LM1}]
\begin{align*}
 \mathbf{f}'\mathbf{Lf} &= \mathbf{f}'\mathbf{Df} - \mathbf{f}'\mathbf{Af} = \sum\limits_{i=1}^{n}d_i(f_i)^2 - \sum\limits_{i,j=1}^{n}A_{ij}f_if_j  \\
 &=\frac{1}{2}( \sum\limits_{i=1}^{n}d_i (f_i)^2 - 2 \sum\limits_{i,j=1}^{n}A_{ij}f_if_j + \sum\limits_{j=1}^{n} d_j (f_j)^2) \\
 &= \frac{1}{2}\sum \limits_{i,j=1}^{n}A_{ij}(f_i-f_j)^2
\end{align*}
\end{proof}

\begin{proof}[Proof of Lemma \ref{TH1}]
Define a group of indicators $\mathbf{f}_j = (f_{1j}, f_{2j},\cdot\cdot\cdot,f_{nj})'$ for $k$ subgraph $R_j, ~ j =1,2,\cdot\cdot\cdot,k$ and $n$ observations $v_i, ~ i=1,2,\cdot\cdot\cdot,n$ as:
$$
f_{ij}= \begin{cases} 0& { v_i \notin R_j}\\ \sqrt{\frac{1}{\lvert R_j \lvert}}& { v_i \in R_j} \end{cases}
$$
and combine them together $\mathbf{F}^{n\times k}=[\mathbf{f}_1,\mathbf{f}_2,\cdot\cdot\cdot,\mathbf{f}_k]$, which satisfies $\mathbf{F}'\mathbf{F}=\mathbf{I}$. For simplicity, we assume $\lvert R_i\lvert \leq \lvert \bar{R}_i \lvert$ for all $i=1,2,\cdot\cdot\cdot,k$ here. Then,

\begin{align*}
\mathbf{f}_i'\mathbf{Lf}_i & = \mathbf{f}_i'(\mathbf{D}-\mathbf{W})\mathbf{f}_i \\
& =\frac{1}{2} \sum\limits_{m=1}\sum\limits_{n=1}w_{mn}(f_{im}-f_{in})^2 \\
& =\frac{1}{2} [\sum\limits_{m \in R_i, n \notin R_i}
A_{mn}(\frac{1}{\sqrt{\lvert R_i \lvert}} - 0)^2 +  \sum \limits_{m \notin R_i, n \in R_i}A_{mn}(0 - \frac{1}{\sqrt{\lvert R_i \lvert }} )^2] \\
& = \frac{1}{2} [ \sum\limits_{m \in R_i, n \notin R_i}A_{mn}\frac{1}{\lvert R_i \lvert} +  \sum\limits_{m \notin R_i, n \in R_i}A_{mn}\frac{1}{\lvert R_i\lvert}] \\
& = \frac{1}{2} [ l(R_i, \overline{R}_i) + l(\overline{R}_i, R_i)] \\
&= l(R_i, \overline{R}_i)=   \frac{1}{2}\frac{W(R_i,\bar{R_i})}{\lvert R_i\lvert}.
\end{align*}

Then we can simplify the loss of cut as

\begin{align*}
l(R_1,R_2,\cdot\cdot\cdot,R_k) &= \frac{1}{2}\sum\limits_{i=1}^{k}\frac{W(R_i, \overline{R}_i )}{\lvert R_i \rvert } = \sum\limits_{i=1}^{k}\mathbf{f}_i'\mathbf{L}\mathbf{f}_i\\
&= \sum\limits_{i=1}^{k}(\mathbf{F}'\mathbf{LF})_{ii} = Tr(\mathbf{F}^T\mathbf{LF}).
\end{align*}
Thus, to minimize the total cut loss is equivalent to
\begin{align}
\arg \min_{\mathbf{F}} Tr(\mathbf{F}'\mathbf{LF})  ~~ s.t. ~~ \mathbf{F}'\mathbf{F}=\mathbf{I}
\end{align}
\end{proof}

\section{The proof of Proposition \ref{CT}}

The proof of Proposition \ref{CT} shares the similar idea of the NDFS in \citet{Ld2012u}. To show the convergence of the optimization algorithm of the GOLFS, we first introduce two lemmas.

\begin{lemma}
\label{lemma1}
\citep{LS1999} \citep{LS2000} For a given function $f(h)$, if function $G(h,h')$ satisfies:
$$
G(h,h')\geq f(h);~G(h,h)=f(h)
$$
we call it an "auxiliary function". Then $f(h)$ is non-increasing under the update
$$
h^{t+1}=\arg \min_h G(h,h^t)
$$
\end{lemma}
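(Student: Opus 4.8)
The plan is to establish the conclusion through a short chain of three (in)equalities that links $f(h^{t+1})$ back to $f(h^t)$, using the two defining properties of the auxiliary function together with the optimality of the update rule. No heavy machinery is needed: once the update $h^{t+1}=\arg\min_h G(h,h^t)$ is in hand, the entire argument is a direct consequence of the definition of an auxiliary function. The target inequality is $f(h^{t+1}) \le G(h^{t+1},h^t) \le G(h^t,h^t) = f(h^t)$, and I would prove it left to right.

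First I would invoke the upper-bound property $G(h,h') \ge f(h)$, specialized to $h=h^{t+1}$ and $h'=h^t$, to obtain $f(h^{t+1}) \le G(h^{t+1},h^t)$. Next I would use that $h^{t+1}$ is by construction a minimizer of the map $h \mapsto G(h,h^t)$; since the value at the minimizer cannot exceed the value at any other admissible point, evaluating at the competitor $h=h^t$ yields $G(h^{t+1},h^t) \le G(h^t,h^t)$. Finally I would apply the tangency property $G(h,h)=f(h)$ at $h=h^t$ to rewrite $G(h^t,h^t)=f(h^t)$. Concatenating the three gives $f(h^{t+1}) \le f(h^t)$, which is precisely the asserted monotone non-increase, so $f$ decreases along the sequence of updates.

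The only point requiring a moment of care, and the closest thing to an obstacle, is the middle inequality: it rests on $h^{t+1}$ genuinely achieving a value no larger than that of every competitor for the minimum of $G(\cdot,h^t)$, which is exactly what the definition $h^{t+1}=\arg\min_h G(h,h^t)$ guarantees once we note that $h^t$ is itself an admissible competitor. I would therefore simply observe that the optimality of $h^{t+1}$ over the feasible set, evaluated against $h^t$, delivers the inequality; neither existence arguments beyond the assumed minimizer nor any uniqueness or smoothness of $G$ is needed for the monotonicity claim.
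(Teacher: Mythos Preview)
Your proof is correct and is exactly the standard three-step chain $f(h^{t+1}) \le G(h^{t+1},h^t) \le G(h^t,h^t) = f(h^t)$ originating from the cited references \citep{LS1999,LS2000}. The paper itself does not supply a proof of this lemma but merely attributes it to those sources, so there is nothing further to compare.
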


\begin{lemma}
\label{lemma2} \citep{Nd2010}
$$
\sqrt{v}-\frac{v}{2\sqrt{v_t}} \leq \sqrt{v_t}-\frac{v_t}{2\sqrt{v_t}}
$$
\end{lemma}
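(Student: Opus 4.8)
The plan is to prove the scalar inequality directly by reducing it to the nonnegativity of a perfect square, after first clearing the one mild simplification hidden in the right-hand side. Throughout I would assume the implicit domain conditions $v\ge 0$ and $v_t>0$, which are exactly the setting in which this lemma is applied (the argument $v_t$ will be a current squared-norm and the square roots and the division by $2\sqrt{v_t}$ must be well defined).

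First I would simplify the right-hand side using $\frac{v_t}{2\sqrt{v_t}}=\frac{\sqrt{v_t}}{2}$, so that the claim becomes the equivalent statement
\begin{equation}
\sqrt{v}-\frac{v}{2\sqrt{v_t}}\le \frac{\sqrt{v_t}}{2}.
\end{equation}
Next I would move everything to one side and consider the difference $\Delta=\frac{\sqrt{v_t}}{2}-\sqrt{v}+\frac{v}{2\sqrt{v_t}}$. The key step is to recognize that this difference is a perfect square divided by a positive quantity: multiplying by $2\sqrt{v_t}>0$ turns $\Delta\ge 0$ into $v_t-2\sqrt{v}\sqrt{v_t}+v\ge 0$, and the left-hand side is exactly $(\sqrt{v_t}-\sqrt{v})^2$. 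Equivalently, I would just display the identity
\begin{equation}
\Bigl(\sqrt{v_t}-\frac{v_t}{2\sqrt{v_t}}\Bigr)-\Bigl(\sqrt{v}-\frac{v}{2\sqrt{v_t}}\Bigr)=\frac{(\sqrt{v}-\sqrt{v_t})^2}{2\sqrt{v_t}}\ge 0,
\end{equation}
which establishes the inequality immediately and, as a byproduct, shows equality holds precisely when $v=v_t$.

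There is essentially no hard step here; this is an elementary one-variable inequality and the only thing to handle with care is the positivity assumption $v_t>0$ needed to justify the division and the sign of $2\sqrt{v_t}$. I would also remark, for context, that the statement is nothing but the tangent-line inequality for the concave function $v\mapsto\sqrt{v}$ at the point $v_t$, namely $\sqrt{v}\le\sqrt{v_t}+\frac{1}{2\sqrt{v_t}}(v-v_t)$, rearranged; this concavity viewpoint offers an alternative one-line justification but the perfect-square computation is the cleanest self-contained route. The equality case $v=v_t$ is the feature that makes this lemma serve its purpose later, since it is what guarantees the auxiliary function in Lemma \ref{lemma1} touches the target function at the current iterate.
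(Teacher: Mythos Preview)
Your proof is correct. The paper does not actually prove this lemma; it merely states it with a citation to \cite{Nd2010} and then invokes it in the proof of Proposition~\ref{CT}. Your perfect-square identity
\[
\Bigl(\sqrt{v_t}-\frac{v_t}{2\sqrt{v_t}}\Bigr)-\Bigl(\sqrt{v}-\frac{v}{2\sqrt{v_t}}\Bigr)=\frac{(\sqrt{v}-\sqrt{v_t})^2}{2\sqrt{v_t}}
\]
is a clean self-contained argument that improves on the paper by not deferring to an external reference, and your remark that this is the tangent-line inequality for the concave map $v\mapsto\sqrt{v}$ makes transparent why the lemma plays the majorization role it does in the $\mathbf{W}$-update step. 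The domain assumptions $v\ge 0$, $v_t>0$ you make explicit are exactly what is needed and are satisfied in the application, where $v=\|\mathbf{w}_i^{t+1}\|_2^2$ and $v_t=\|\mathbf{w}_i^{t}\|_2^2$.
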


When we update $\mathbf{F}$ holding $\mathbf{W}$ unchanged, we denotes the related parts in objective function as $f(\mathbf{F})$:
\begin{equation}
\label{ }
f(\mathbf{F})=Tr(\mathbf{F}'\mathbf{L}_1\mathbf{F})+\lambda Tr(\mathbf{F}'\mathbf{L}_0\mathbf{F})+Tr(\mathbf{F}'\mathbf{M}\mathbf{F})+\frac{\gamma}{2} \left\| \mathbf{F}'\mathbf{F}-\mathbf{I}_c \right\| _F^2
\end{equation}
When we update one element $F_{ij}$ of $\mathbf{F}$ holding the other elements constant, we can treat $f(\mathbf{F})$ as a function of $F_{ij}$ and rewrite it as $f_{ij}(F_{ij})=f(\mathbf{F})$. The first and second derivative of $f_{ij}(F_{ij})$ with respect to $F_{ij}$ is :
\begin{align}
\label{}
    f_{ij}'(F_{ij}) &= 2[\mathbf{L}_1\mathbf{F}+\lambda\mathbf{L}_0 \mathbf{F}+\mathbf{MF}+\gamma\mathbf{F}(\mathbf{F}'\mathbf{F}-\mathbf{I}_c)]_{ij}   \\
    f_{ij}''(F_{ij}) &= 2[\mathbf{L}_1+\lambda\mathbf{L}_0+\mathbf{M}+\gamma(3\mathbf{F}\mathbf{F}'-\mathbf{I}_c)]_{ij}
\end{align}
According to Taylor expansion for $f_{ij}(F_{ij})$ at point $F_{ij}^t$:
\begin{align}
\label{}
f_{ij}(F_{ij}) \approx f_{ij}(F_{ij}^t) + (F_{ij}-F_{ij}^t)f_{ij}'(F_{ij})+\frac{1}{2}(F_{ij}-F_{ij}^t)^2f_{ij}''(F_{ij})
\end{align}
Define
\begin{equation}
\label{ }
G(F_{ij},F_{ij}^t) = f_{ij}(F_{ij}^t)+f_{ij}'(F_{ij}^t)(F_{ij}-F^t_{ij})+K_{ij}(\mathbf{F}^t)(F_{ij}-F^t_{ij})^2
\end{equation}
where $K_{ij}(\mathbf{F}^t)=\frac{[\mathbf{L}_1\mathbf{F}^t+\lambda\mathbf{L}_0\mathbf{F}^t+\mathbf{MF}^t+\gamma\mathbf{F}^t(\mathbf{F}^t)'\mathbf{F}^t]_{ij}}{F_{ij}^t}$. It's easy to check the function $G(F_{ij},F_{ij}^t)$ satisfies:
\begin{equation}
\label{ }
G(F_{ij},F_{ij}^t) \geq f_{ij}(F_{ij}),~G(F_{ij},F_{ij})=f_{ij}(F_{ij})
\end{equation}
By lemma (\ref{lemma1}), $G(F_{ij},F_{ij}^t)$ is an auxiliary function and $f_{ij}(F_{ij})$ is non-increasing under the update $F_{ij}^{t+1}=\arg \min_{F_{ij}} G(F_{ij},F_{ij}^t)$, i.e. $f_{ij}(F^{t+1}_{ij})\leq G(F{ij}^{t+1},F{ij}^t) \leq f_{ij}(F_{ij}^t)$. With fixed $\mathbf{W}$, it's equivalent to $\mathcal{L}(\mathbf{F}^{t+1},\mathbf{W}^t) \leq \mathcal{L}(\mathbf{F}^{t},\mathbf{W}^t)$.

Therefore, we can derive the update of $F_{ij}$ by the first order condition of $\frac{\partial G(F_{ij},F_{ij}^t)}{\partial F_{ij}}=0$:
\begin{align}
\label{}
    f'_{ij}(F_{ij}^t)+\frac{2[\mathbf{L}_1\mathbf{F}^t+\lambda\mathbf{L}_0\mathbf{F}^t+\mathbf{MF}^t+\gamma\mathbf{F}^t(\mathbf{F}^t)'\mathbf{F}^t]_{ij}}{F_{ij}^t} (F_{ij}-F_{ij}^t) = 0   \\
   F_{ij}^{t+1}= \frac{(\gamma\mathbf{F}^t)_{ij}}{[\mathbf{L}_1\mathbf{F}^t+\lambda\mathbf{L}_0\mathbf{F}^t+\mathbf{MF}^t+\gamma \mathbf{F}^t (\mathbf{F}^t)'   \mathbf{F}^t]_{ij}} \mathbf{F}^t_{ij}
\end{align}
This coincide with the update of $F_{ij}$ in the optimization algorithm of GLOFS.

When we update $\mathbf{W}$ with fixed $\mathbf{F}$, $\mathbf{W}=(\mathbf{X}'\mathbf{X}+\beta\mathbf{D})^{-1}\mathbf{X}'\mathbf{F}$ is the solution of following question:
\begin{equation}
\min_{\mathbf{W}} \left\| \mathbf{X}\mathbf{W}-\mathbf{F}\right\| _F^2+\beta Tr(\mathbf{W}'\mathbf{DW})
\end{equation}
Thus,
\begin{equation}
\mathbf{W}^{t+1} = \arg\min_{\mathbf{W}} \left\| \mathbf{X}\mathbf{W}-\mathbf{F}^t\right\|_F^2+\beta Tr(\mathbf{W}'\mathbf{D}^t \mathbf{W})
\end{equation}
Substituting $\mathbf{W}^{t+1}$ into the objective function and using lemma \ref{lemma2}, we can derive the following inequality with similar procedure in \citep{Ld2012u}:
\begin{equation}
\left\|\mathbf{X}\mathbf{W}^{t+1}-\mathbf{F}^{t}\right\|_F^2+\beta \left\|\mathbf{W}^{t+1}\right\|_{2,1}
\leq
\left\|\mathbf{X}\mathbf{W}^{t}-\mathbf{F}^{t}\right\|_F^2+\beta \left\|\mathbf{W}^{t}\right\|_{2,1}
\end{equation}
with positive $\beta$. This show $\mathcal{L}(\mathbf{F}^{t+1}, \mathbf{W}^{t+1}) \leq \mathcal{L}(\mathbf{F}^{t+1},\mathbf{W}^t)$. Together with $\mathcal{L}(\mathbf{F}^{t+1},\mathbf{W}^t) \leq \mathcal{L}(\mathbf{F}^{t}  \mathbf{W}^t) $ proved previously, we finally get
\begin{equation}
\label{ }
\mathcal{L}(\mathbf{F}^{t+1}, \mathbf{W}^{t+1}) \leq \mathcal{L}(\mathbf{F}^{t+1},\mathbf{W}^t) \leq \mathcal{L}(\mathbf{F}^{t},\mathbf{W}^t)
\end{equation}

That is, the objective function $\mathcal{L}(\mathbf{F}, \mathbf{W})$ monotonically decreases during the updating process. Therefore, we can obtain the local optimal $\mathbf{W}$ through the alternative and iterative optimization algorithm.

\section{Supporting information}

\subsection{The sensitivity of tuning parameters}
In this section, we study the sensitivity of tuning parameters
for the GOLFS algorithm. For simplicity, we only report the experimental results for simulation example 1 as an illustration. We evaluate Clustering Accuracy (ACC), Normalized Mutual Information (NMI), and Adjusted Rand Index (ARI) over different values of tuning parameters and the results are present in Figure \ref{SOPS}. The darkness and heights of bars indicate the values of evaluations. The results show the performance of GOLFS is not so sensitive to tuning parameters $\alpha$ and $\beta$.  In practice, when the label is unknown, we can select tuning parameters by optimizing some internal measurements, such as average inner-group distance (AID), Calinski Harabasz score (CHS), and Davies Bouldin index (DBI).

\begin{figure}[ht]
\begin{center}
\includegraphics[width=4.5in]{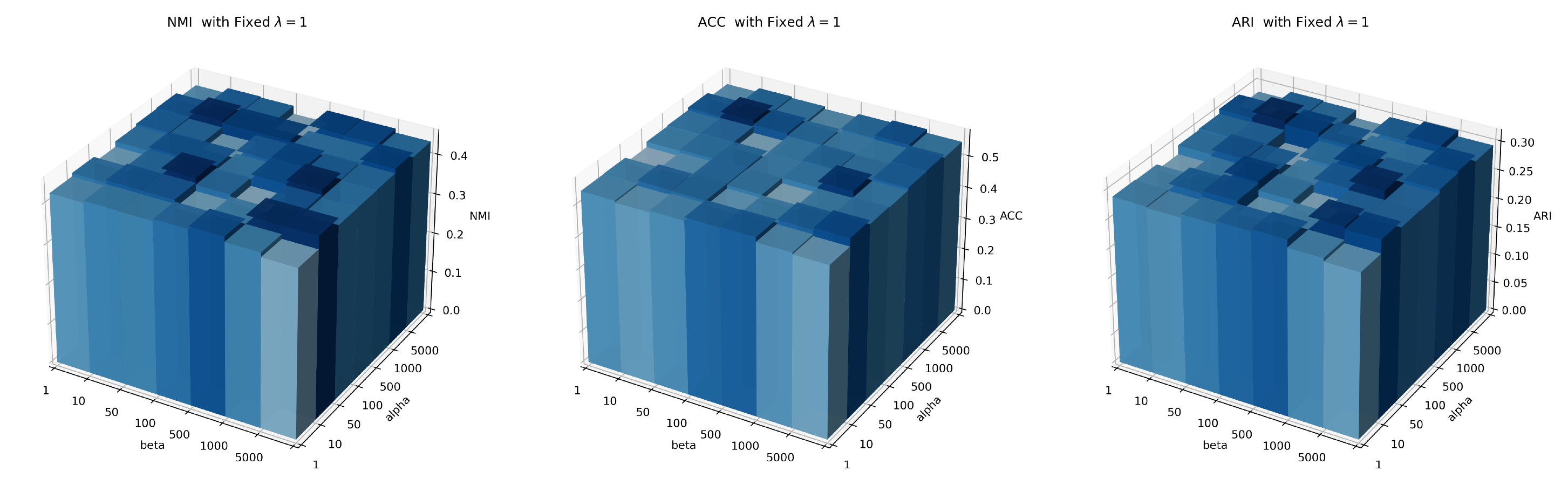}
\caption{The sensitivity of tuning parameters: simulation example 1}
\label{SOPS}
\end{center}
\end{figure}
 
\subsection{The word cloud of the clusters in empirical analysis}

For Example 3 in the empirical analysis, the GOLFS is used to select the true discriminative keywords and then the articles are clustered by K-means clustering based on the selected words. The bar charts in Section 4 only show the top 10 features selected in each cluster. Here we use the word cloud to show all the keywords selected for each cluster additionally. We can also find the keywords selected for each cluster are discriminative and reasonable.

\begin{figure}[H]
\begin{center}
  \subfigure[cluster 1]{
    \includegraphics[height=0.5in]{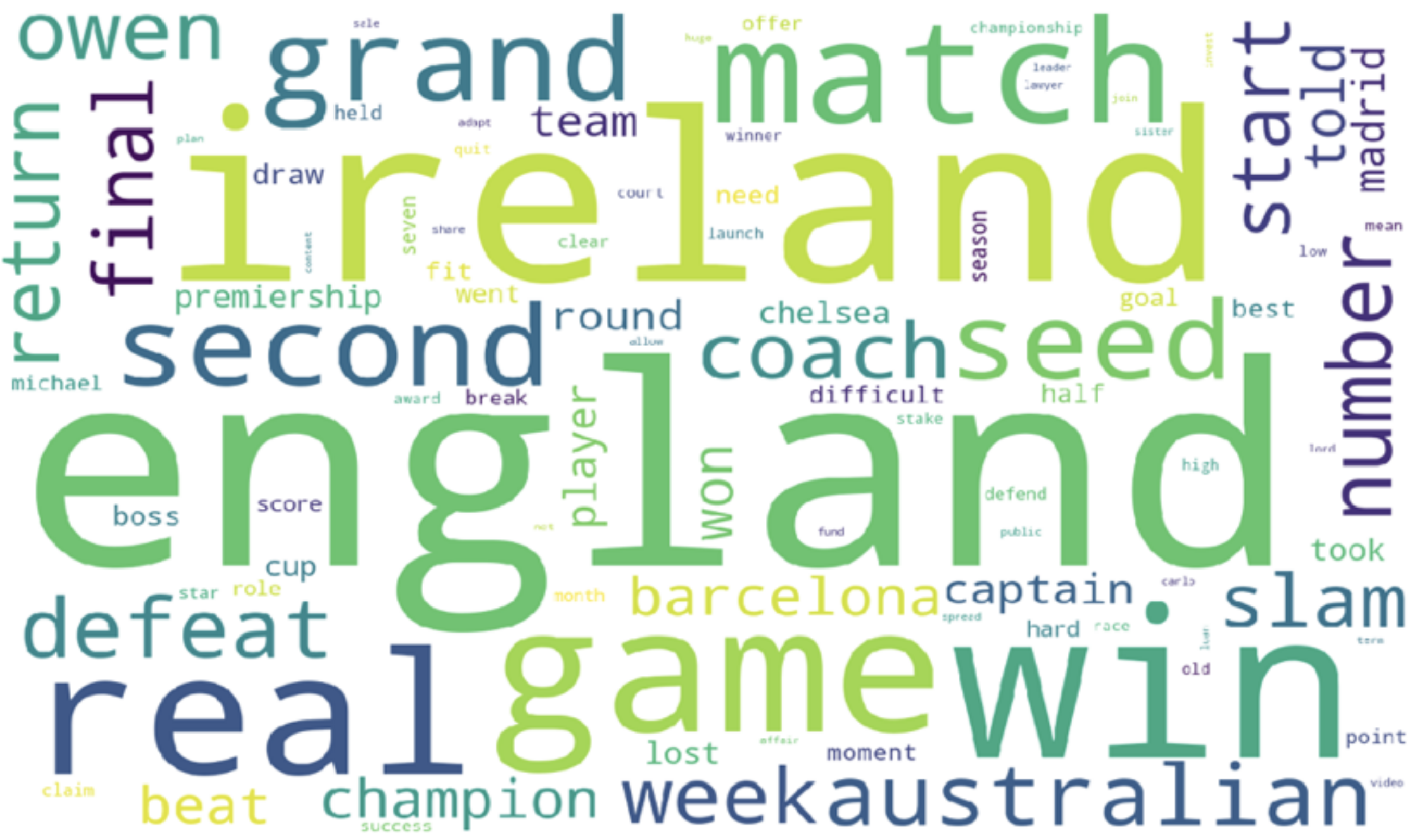}
  }
  \subfigure[cluster 2]{
    \includegraphics[height=0.5in]{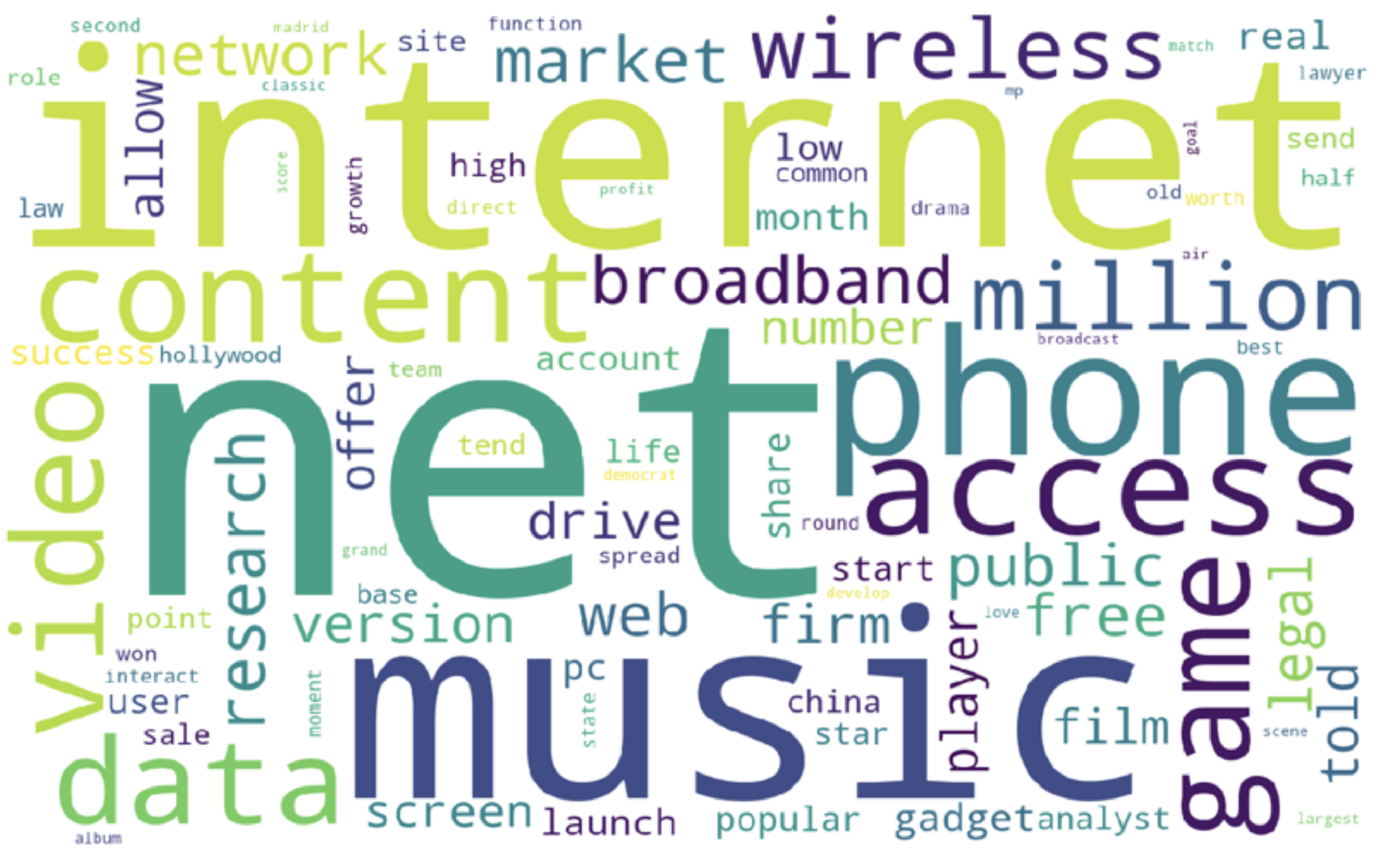}
  }
  \subfigure[cluster 3]{
    \includegraphics[height=0.5in]{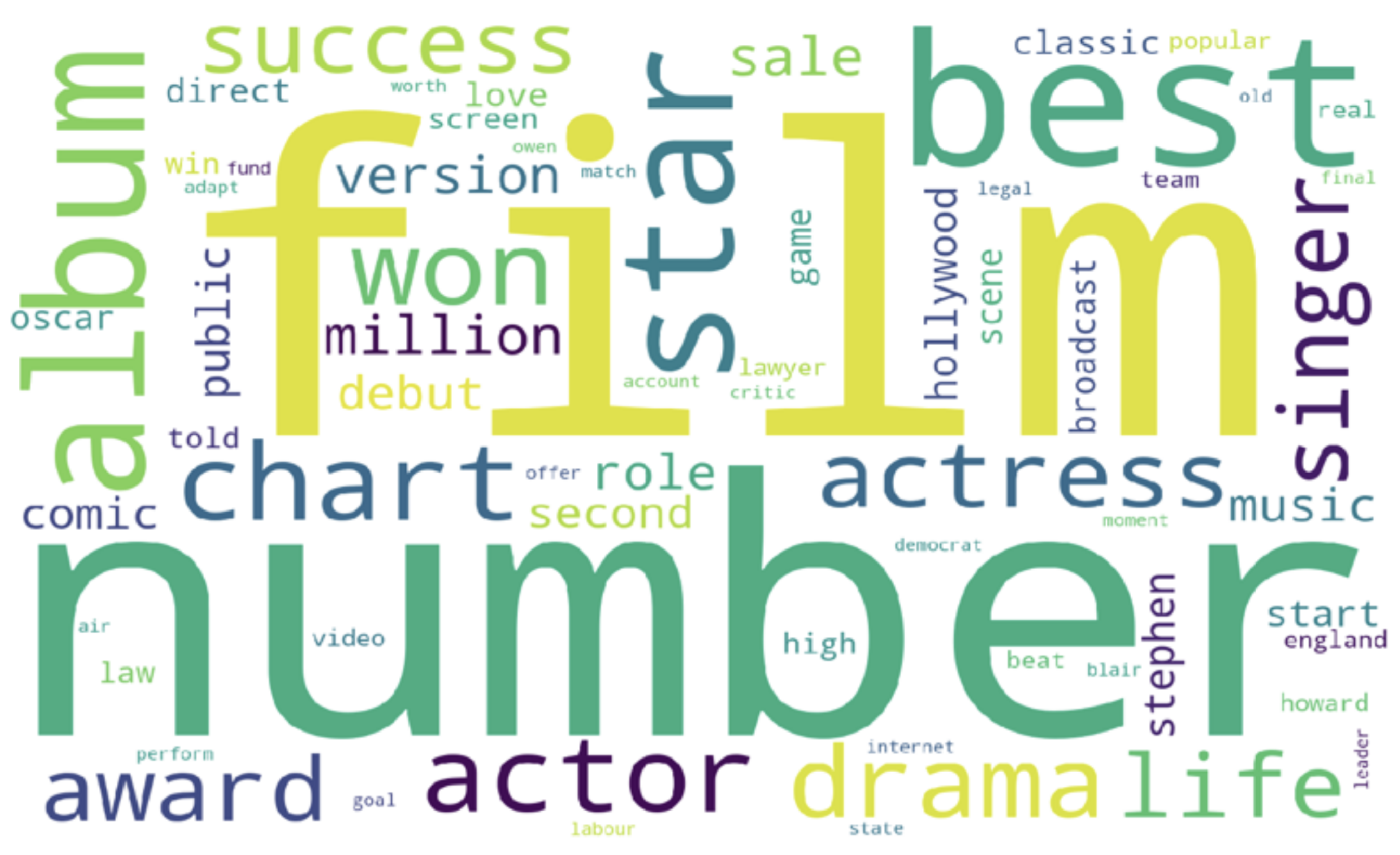}
  }
  \subfigure[cluster 4]{
    \includegraphics[height=0.5in]{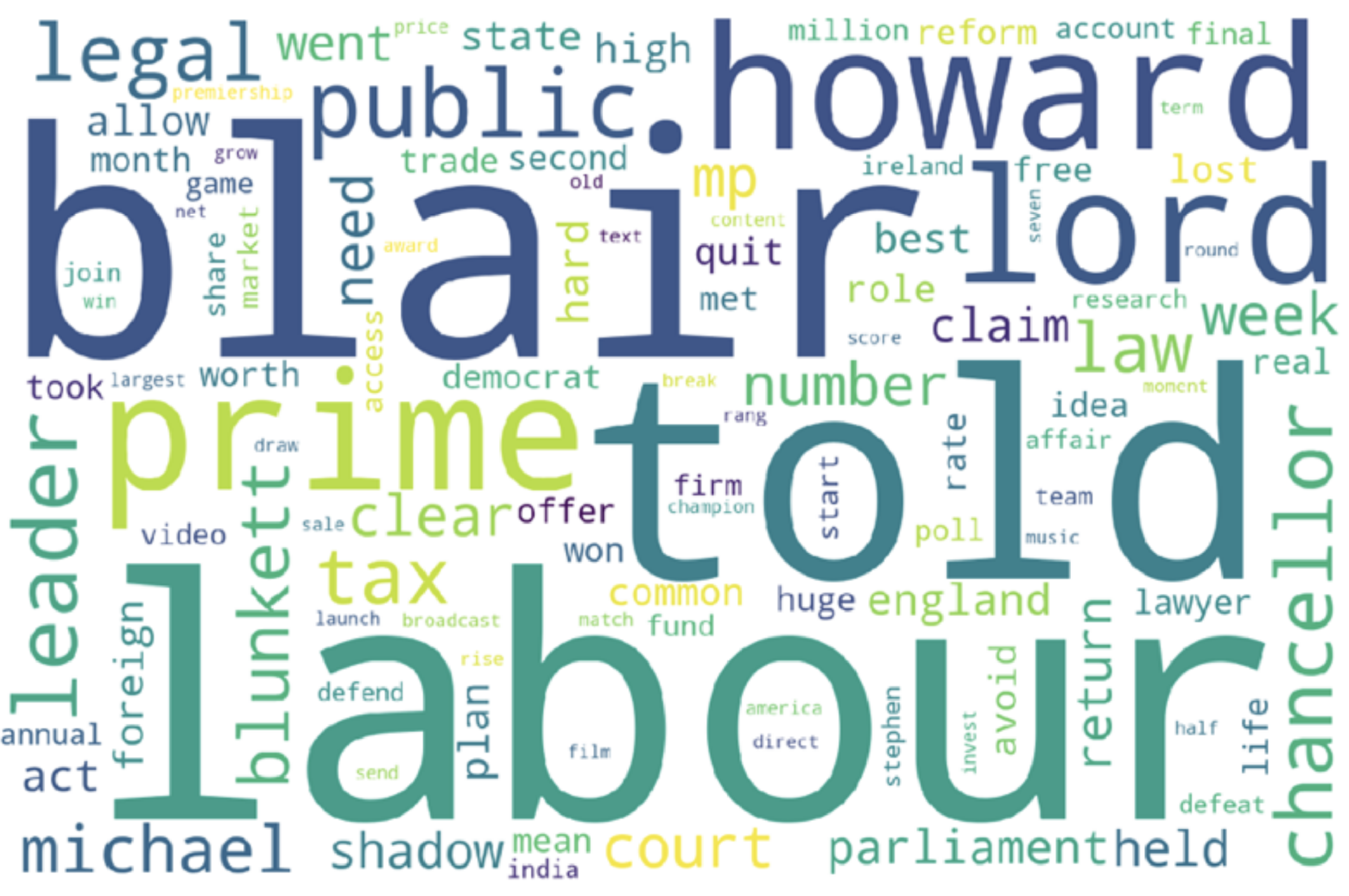}
  }
  \subfigure[cluster 5]{
    \includegraphics[height=0.5in]{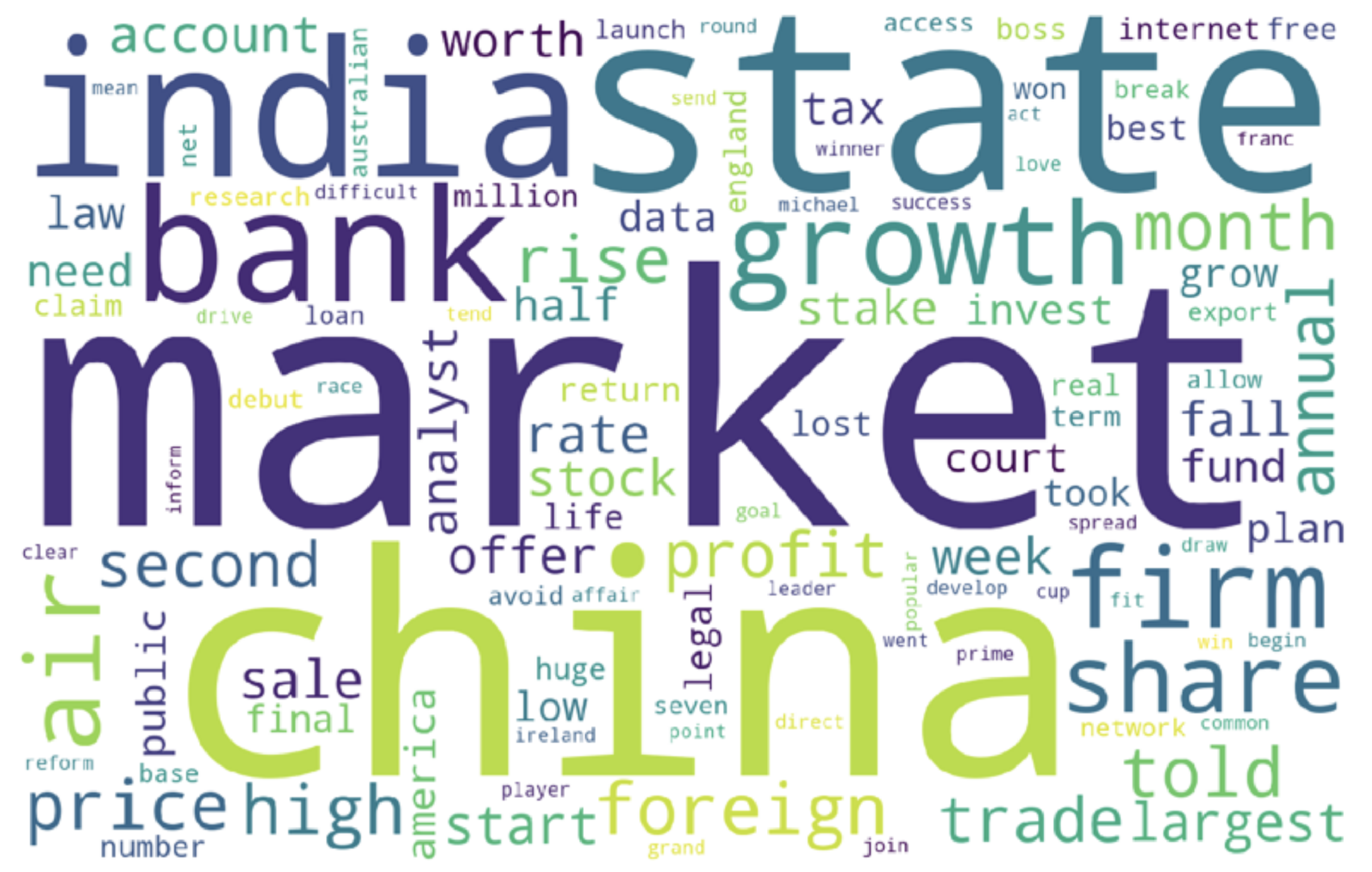}
  }
\end{center}
\caption{Word clouds of the top keywords selected by the GOLFS in 5 clusters.}
\label{BBCfig}
\end{figure}

\subsection{The performances of the clustering algorithms}

In this section, Besides the K-Means method, we additionally consider two different classic clustering methods, hierarchical clustering (HC), and spectral clustering (SC), after selecting an important subset of features. We consider the following six evaluation criteria. Like the simulations and real data analysis in the main text, we consider Clustering Accuracy (ACC), Normalized Mutual Information (NMI), and Adjusted Rand Index (ARI) to measure the similarity between the clustering results based on selected features and the true labels. Without the information about the true labels, we additionally measure the results using three internal measurements: average inner-group distance (AID), Calinski Harabasz score (CHS), and Davies Bouldin index (DBI). Lower values of AID and DBI indicate better clustering results.
The comparison results for simulation examples 1-2 are summarized in Tables C1-C2. We can observe that the GOLFS algorithm can obtain better performances than the other existing feature selection methods no matter which clustering method is used for the selected subset of features.

\begin{table}[h]
\centering
\footnotesize
\caption{Performance comparison among different unsupervised feature selection methods and clustering methods in Simulation Example 1}
\label{SSSS}
\begin{tabular}{@{}cccccccc@{}}
\toprule
\textbf{Clustering}     & \textbf{Method} & \textbf{NMI(\%)}   & \textbf{ACC($10^{-2}$)}   & \textbf{ARI($10^{-2}$)}   & \textbf{AID} & \textbf{CHS} & \textbf{DBI} \\ \midrule
\multirow{7}{*}{KMeans} & Benchmark       & 10.54          & 31.62          & 3.30           & 21632.16                     & 1.69             & 5.40               \\
                        & DIPTEST         & 5.90           & 29.81          & 0.27           & 488.55                       & 5.38             & 2.97               \\
                        & MCFS            & 5.75           & 30.37          & 0.25           & 2.09                         & 5.89             & 2.89               \\
                        & NDFS            & 37.80          & 45.32          & 17.43          & 0.30                         & 583.93           & 0.63               \\
                        & RSR             & 18.74          & 38.00          & 8.24           & 1.92                         & 5.15             & 8.85               \\
                        & UDFS            & 6.40           & 31.08          & 0.71           & 2.09                         & 5.93             & 1.21               \\
                        & \textbf{GOLFS}  & \textbf{38.41} & \textbf{45.92} & \textbf{17.85} & \textbf{0.30}                & \textbf{584.85}  & \textbf{0.56}      \\ \midrule
\multirow{7}{*}{HC}     & Benchmark       & 14.26          & 31.62          & 4.46           & 21644.40                     & 1.70             & 3.74               \\
                        & DIPTEST         & 8.02           & 23.92          & -0.03          & 489.65                       & 2.30             & 1.41               \\
                        & MCFS            & 7.06           & 23.71          & -0.06          & 2.09                         & 2.54             & 1.21               \\
                        & NDFS            & 26.89          & 42.56          & 8.43           & 0.30                         & 583.93           & 0.63               \\
                        & RSR             & 17.98          & 33.15          & 5.61           & 1.92                         & 5.15             & 8.85               \\
                        & UDFS            & 7.06           & 23.71          & -0.06          & 2.09                         & 5.93             & 1.21               \\
                        & \textbf{GOLFS}  & \textbf{26.89} & \textbf{42.96} & \textbf{8.43}  & \textbf{0.30}                & \textbf{584.85}  & \textbf{0.56}      \\ \midrule
\multirow{7}{*}{SC}     & Benchmark       & 10.38          & 33.62          & 3.56           & 21646.07                     & 1.55             & 5.54               \\
                        & DIPTEST         & 6.21           & 29.69          & 0.56           & 488.22                       & 4.70             & 2.93               \\
                        & MCFS            & 5.53           & 30.92          & 0.11           & 2.09                         & 5.69             & 2.86               \\
                        & NDFS            & 29.86          & 41.81          & 12.71          & 0.30                         & 583.93           & 0.63               \\
                        & RSR             & 12.32          & 34.54          & 4.90           & 1.92                         & 5.15             & 8.85               \\
                        & UDFS            & 5.53           & 30.92          & 0.11           & 2.09                         & 5.93             & 1.21               \\
                        & \textbf{GOLFS}  & \textbf{30.01} & \textbf{41.90} & \textbf{12.78} & \textbf{0.30}                & \textbf{584.85}  & \textbf{0.56}      \\ \bottomrule
\end{tabular}
\begin{tablenotes}
{\footnotesize \raggedleft Note: ACC denotes the clustering accuracy, NMI denotes the normalized mutual information, ARI denotes the adjusted Rand index, AID denotes the average inner-group distance, CHS represents Calinski Harabasz score, DBI denotes the Davies Bouldin index, HC denotes the hierarchical clustering and SC denotes the spectral clustering.}
\end{tablenotes}
\end{table}

\begin{table}[h]
\centering
\footnotesize
\caption{Performance comparison among different unsupervised feature selection methods and clustering methods in Simulation Example 2}
\label{SSSS2}
\begin{tabular}{@{}cccccccc@{}}
\toprule
\textbf{Clustering}     & \textbf{Method} & \textbf{NMI($10^{-2}$)}   & \textbf{ACC(\%)}   & \textbf{ARI($10^{-2}$)}  & \textbf{AID} & \textbf{CHS} & \textbf{DBI} \\ \midrule
\multirow{7}{*}{KMeans}   & Benchmark       & 30.77          & 43.00          & 14.36          & 11183.01                     & 1.37             & 3.87               \\
                    & DIPTEST         & 14.72          & 35.21          & 1.70           & 239.02                       & 4.18             & 2.29               \\
                    & MCFS            & 19.21          & 37.47          & 5.39           & 0.96                         & 4.18             & 2.38               \\
                    & NDFS            & 51.66          & 53.67          & 30.36          & 0.20                         & 142.53           & 0.87               \\
                    & RSR             & 49.52          & 55.32          & 29.95          & 1.75                         & 6.81             & 10.00              \\
                    & UDFS            & 20.67          & 38.91          & 6.50           & 0.95                         & 4.26             & 1.26               \\
                    & \textbf{GOLFS}  & \textbf{52.45} & \textbf{55.13} & \textbf{31.50} & \textbf{0.19}                & \textbf{144.88}  & \textbf{0.52}      \\ \midrule
\multirow{7}{*}{HC} & Benchmark       & 35.28          & 42.13          & 19.71          & 11185.56                     & 1.41             & 3.10               \\
                    & DIPTEST         & 14.67          & 29.13          & 0.46           & 239.03                       & 2.72             & 1.40               \\
                    & MCFS            & 18.33          & 30.50          & 3.02           & 0.96                         & 2.75             & 1.26               \\
                    & NDFS            & 52.62          & 49.03          & 29.58          & 0.20                         & 142.53           & 0.87               \\
                    & RSR             & 52.68          & 49.97          & 31.46          & 1.75                         & 6.81             & 10.00              \\
                    & UDFS            & 18.33          & 30.50          & 3.02           & 0.95                         & 4.26             & 1.26               \\
                    & \textbf{GOLFS}  & \textbf{52.81} & \textbf{49.19} & \textbf{29.92} & \textbf{0.19}                & \textbf{144.88}  & \textbf{0.52}      \\ \midrule
\multirow{7}{*}{SC} & Benchmark       & 22.16          & 38.25          & 6.61           & 11184.38                     & 1.30             & 4.02               \\
                    & DIPTEST         & 13.93          & 35.69          & 1.10           & 238.88                       & 3.88             & 2.31               \\
                    & MCFS            & 16.87          & 36.50          & 3.49           & 0.96                         & 4.03             & 2.34               \\
                    & NDFS            & 38.85          & 48.66          & 23.83          & 0.20                         & 142.53           & 0.87               \\
                    & RSR             & 33.63          & 46.97          & 17.89          & 1.75                         & 6.81             & 10.00              \\
                    & UDFS            & 16.87          & 36.50          & 3.49           & 0.95                         & 4.26             & 1.26               \\
                    & \textbf{GOLFS}  & \textbf{39.82} & \textbf{49.44} & \textbf{24.43} & \textbf{0.19}                & \textbf{144.88}  & \textbf{0.52}      \\ \bottomrule
\end{tabular}
\begin{tablenotes}
{\footnotesize   \raggedleft Note: ACC denotes the clustering accuracy, NMI denotes the normalized mutual information, ARI denotes the adjusted Rand index, AID denotes the average inner-group distance, CHS represents Calinski Harabasz score, DBI denotes the Davies Bouldin index, HC denotes the hierarchical clustering and SC denotes the spectral clustering.}
\end{tablenotes}
\end{table}

\end{appendices}
\end{document}